\journal{X}
\newtheorem{theorem}{Theorem}
\begin{document}

\begin{frontmatter}

\title{On the Equity of Nuclear Norm Maximization in Unsupervised Domain Adaptation}

\author[address1,address2]{Wenju Zhang}
\author[address1,address2]{Xiang Zhang\corref{mycorrespondingauthor}}
\author[address3]{Qing Liao\corref{mycorrespondingauthor}}
\author[address1,address2]{Long Lan}
\author[address1,address2]{Mengzhu~Wang}
\author[address4]{Wei Wang}
\author[address2]{Baoyun Peng}
\author[address5]{Zhengming Ding}

\cortext[mycorrespondingauthor]{Corresponding author\\
$~~~~~$ \emph{Email address}: zhangxiang08@nudt.edu.cn (X. Zhang), liaoqing@hit.edu.cn (Q. Liao).
}

\address[address1]{Institute for Quantum Information $\&$ State Key Laboratory of High Performance Computing, National University of Defense Technology, Changsha, China}
\address[address2]{College of Computer, National University of Defense Technology, Changsha, China}
\address[address3]{Department of Computer Science and Technology, \\Harbin Institute of Technology, Shenzhen, China}
\address[address4]{DUT-RU International School of Information Science $\&$ Engineering, \\Dalian University of Technology, Dalian, China}
\address[address5]{Tulane University, New Orleans, LA, USA}

\begin{abstract}
Nuclear norm maximization has shown the power to enhance 
the transferability of unsupervised domain adaptation model (UDA) {in an empirical scheme}. 

In this paper, we identify a new property termed \textit{equity}, which indicates the balance degree of predicted classes, to demystify the efficacy of nuclear norm maximization for UDA theoretically. With this in mind, we offer a new discriminability-and-equity maximization paradigm built on squares loss, such that predictions are equalized explicitly. To verify its feasibility and flexibility, two new losses termed Class Weighted Squares Maximization (CWSM) and Normalized Squares Maximization (NSM), are proposed to maximize both predictive discriminability and equity, from the class level and the sample level, respectively. Importantly, we theoretically relate these two novel losses (\emph{i.e.}, CWSM and NSM) to the equity maximization under
mild conditions, and empirically suggest the importance of the predictive equity in UDA. Moreover, it is very efficient to realize the equity constraints in both losses. Experiments of cross-domain image classification on three popular benchmark datasets show that both CWSM and NSM contribute to outperforming the corresponding counterparts.
\end{abstract}

\begin{keyword}
Transfer learning\sep domain adaptation\sep image classification \sep nuclear norm
\end{keyword}

\end{frontmatter}


\section{Introduction}

Deep learning has accomplished great success and dramatically improved the state-of-the-arts in various applications, especially for supervised learning tasks. The exciting performance of deep learning is believed to rely on a large deal of labeled training data. However, massive data acquisition and labeling process is always time-consuming and expensive. Instead of labeling new data, one alternative solution is to utilize existing labeled data or learned knowledge to facilitate different but related new tasks. A problem arising from this scenario is that there exists distribution discrepancy between existing data and new data, which is also known as domain shift. The existence of domain shift violates the basic assumption for standard machine learning, i.e., training data and test data are drawn from the same distribution, so that it leads to unsatisfied performance. Domain adaptation is one sort of techniques which mitigate domain shift in order to transfer specific knowledge from one domain (source domain) to another domain (target domain).

Known as a special case of transfer learning, domain adaptation has been extensively studied, and a considerable body of methods has been proposed in the past two decades \cite{pan2009survey,shao2014transfer,wang2018deep}.
According to whether the target domain has labels or not, domain adaptation can be grouped into supervised domain adaptation and unsupervised domain adaptation (UDA). Here, we focus on UDA which is more challenging and practical.
Early efforts in this regard mainly belong to non-deep machine learning methods, including instance-based methods \cite{gong2013connecting,gretton2009covariate,huang2007correcting} which learn weights of instances to reduce the cross-domain distribution discrepancy, and feature-based methods \cite{gong2012geodesic,long2013transfer,pan2010domain} which learn a common subspace where the domain shift is reduced. Afterwards, feature-based methods have become the mainstream direction in the field of UDA \cite{long2013transfer, gong2016domain, zhang2017joint, zhang2020maximum}.

With the advance of deep learning, 
the learned features by deep neural networks have shown a certain capacity of transferability \cite{donahue2014decaf,glorot2011domain,oquab2014learning}. However, this transferability is still weak, when source domain and target domain have distant distance \cite{yosinski2014transferable}. Besides, many theoretical studies on domain adaptation have also been explored \cite{ben2010theory,liu2017understanding,zhang2019bridging}.

 More recently, deep UDA methods which unify deep feature learning and domain adaptation have attracted much attention and significantly pushed the boundaries of UDA.

Among them, semi-supervised learning (SSL) based methods \cite{chen2019domain,cui2020towards,grandvalet2005semi} become one of the hot research topics due to the akin relation between SSL and UDA in problem definition. As a classical SSL method, entropy minimization \cite{grandvalet2005semi} has been widely incorporated into UDA models \cite{long2015learning,xu2019larger} to enhance predictive discriminability. Recently, maximum squares loss \cite{chen2019domain} has been proposed to tackle the over large gradient problem for easy-to-transfer samples which the entropy minimization confronts. 
Differently, batch nuclear norm maximization (BNM) \cite{cui2020towards} tries to maximize the nuclear norm of mini-batch target outputs to enhance both predictive discriminability and diversity. BNM outperforms both entropy minimization and maximum squares loss by a large margin, indicating the importance of diversity for UDA.

Inspired by the success of BNM in UDA where the nuclear norm serves as the common proxy of both the Frobenius-norm and the matrix rank, we continue to delve into the effectiveness of the nuclear norm from a different perspective. Towards this goal, a new property \textit{equity} is introduced to demystify the new working mechanism of nuclear norm theoretically. Different from the diversity which recognizes the individual differences of target outputs without caring about the class-size fairness, the \emph{equity} emphasizes the total class-size balance degree within a set of target outputs. Note that the nuclear norm yields the equity implicitly rather than explicitly. This is not obvious, and could make it hard to understand this property and limits its applicability. Thus, from the new perspective of the equity, we devise two novel loss functions for UDA by explicitly incorporating the equity constraints into squares loss, that is, the class weighted squares maximization (CWSM) and normalized squares maximization (NSM), to show the importance of the equity in UDA.

Particularly, 
 in CWSM, we re-weight the class-wise squares loss by the weight that is inversely proportional to the soft class size. In theory, the optimal solution to both CWSM and BNM is equivalent under mild conditions, which indirectly uncovers the mechanism of the equity truly working in BNM again. Besides,
 in NSM, we divide the squares loss by a normalization term which can be rewritten as the square-sum of soft class sizes in a special case.
 We theoretically prove that the optimal solution to CWSM and NSM has maximal discriminability and equity under mild conditions. Experiments on three benchmark datasets including Office-31, Office-Home, and VisDA-2017 show that both CWSM and NSM outperform several state-of-the-art methods, which also implies the feasibility of our claim.

This paper is an extension to our previous work \cite{zhang2020robust}. Compared with \cite{zhang2020robust}, several substantial differences have been made
as follows: 1) We propose a new loss function termed CWSM as an alternative implementation of discriminability and equity maximization. 2) We theoretically prove that the optimal solution to CWSM has maximal discriminability and equity. 3) We generalize NSM by introducing a parameter $r$ and interpret the robust NSM in our previous work as a special case of NSM with $r=0.5$. 4) We add considerable analysis, discussions, and experiments in this paper. In a nutshell, we make four major contributions as follows:

\begin{itemize}
  \item We are the first to introduce a new property the equity to demystify the new working mechanism of the nuclear norm in theory.
  \item On the basis of the equity, two novel loss functions including class weighted squares maximization (CWSM) and normalized squares maximization (NSM), are then employed to simultaneously encourage prediction discriminability and equity.
  \item We theoretically prove that the optimal solution to CWSM and NSM has maximal discriminability and equity under mild conditions.
  \item A series of experiments are conducted on three benchmark datasets, and such results verify the efficacy of both CWSM and NSM as compared to state-of-the-art methods.
\end{itemize}

\section{Related work}

Most deep networks based unsupervised domain adaptation (UDA) methods can be categorized into four groups: 1) distribution matching based methods \cite{tzeng2014deep,long2015learning,sun2016deep,long2016unsupervised,long2017deep} which explicitly reduce the cross-domain distribution discrepancy by minimizing a certain distribution metric, such as widely used maximum mean discrepancy \cite{gretton2012kernel}. 2) adversarial learning based methods \cite{ganin2016domain, long2018conditional,zhang2019domain} which reduce domain shift by learning cross-domain embeddings that can not be distinguished by the domain discriminator. 3) semi-supervised learning (SSL) based methods \cite{chen2019domain,cui2020towards,grandvalet2005semi} which implicitly cope with domain shift using SSL techniques. and 4) other methods \cite{xu2019larger,kang2019contrastive}. Our study focuses on SSL based UDA methods.

Various SSL techniques including self-training, self-ensembling, and entropy minimization have been adopted to build UDA models for cross-domain segmentation or classification tasks \cite{zou2018unsupervised,zou2019confidence,french2018self,perone2019unsupervised,vu2019advent,chen2019domain}. Entropy minimization \cite{grandvalet2005semi} is one of the most popular SSL methods which reduces prediction uncertainty by minimizing Shannon Entropy of prediction. There exist UDA models \cite{long2015learning,xu2019larger} that use entropy minimization to enhance predictive discriminability so as to promote classification performance. However, entropy minimization faces the problem that the gradient of the entropy is biased towards easy-to-transfer samples.
Recently, the maximum squares loss \cite{chen2019domain} has been proposed to improve entropy minimization by lowering the gradient of easy-to-transfer samples. In a more recent study \cite{cui2020towards}, the nuclear norm of mini-batch prediction matrix is maximized to enhance both prediction discriminability and diversity. Since taking diversity into consideration, nuclear norm maximization outperforms entropy minimization and maximum squares. However, the working mechanism of nuclear norm maximization remains ambiguous. In addition, based on the similar motivation of balancing classes, there are methods \cite{zou2018unsupervised,zou2019confidence,chen2019domain} which use imbalanced learning \cite{he2009learning} to deal with the class imbalance problem in UDA. However, the pseudo-labels are needed for these approaches because imbalanced learning is supervised but there is lack of the labels of target domain in UDA.

\section{The Proposed Algorithm}

\subsection{Algorithm Overview}
We begin with notations and problem formulation. Denoting the labeled data in source domain as ${D_s} = \left\{ {\left( {{x^s},{y^s}} \right)} \right\}$ and unlabeled data in target domain as ${D_t} = \{ {{x^t}} \}$, unsupervised domain adaptation (UDA) attempts to train a classifier which is well performed on $D_t$ by utilizing $D_s$. A general objective for UDA models focused in this paper can be formulated as:
\begin{equation}
\label{equ:obj}
    \mathop {\min }\limits_\theta  {{\mathbb{E}}_{( {x_i^s,{y^s}} ) \sim {D_s}}}{L_{ce}}( {p_i^s,{y_s}} ) + \lambda{{\mathbb{ E}}_{x_j^t \sim {D_t}}}{L_{t}}( {p_j^t}),
\end{equation}
where $\theta$ represents network parameters, $p_i^s$ ($p_j^t$) represents the class probability prediction by feeding $x_i^s$ ($x_j^t$) into the deep network, ${L_{ce}}$ represents the cross entropy loss, ${L_{t}}$ represents the loss function for target prediction, and $\lambda$ is the weight parameter for ${L_{t}}$. By substituting ${L_{t}}$ with the maximum squares loss \cite{chen2019domain}, or the nuclear norm maximization loss \cite{cui2020towards}, or the proposed class weighted squares maximization loss, or the proposed normalized squares maximization loss, the corresponding UDA model can be obtained.

Using mini-batch gradient descent as the optimization algorithm, we denote the probability prediction matrix of target samples in a mini-batch as $P \in \mathbb R^{B \times C}$ , where $B$ represents the mini-batch size and $C$ represents the number of classes. According to the property of probability, there is ${P_{ic}} \ge 0$ for $1\le i \le B, 1\le j\le C$ and $\sum_{c = 1}^C {{P_{ic}}}  = 1$ for $1\le i\le B$. We also denote the $i$-th row of $P$ as $P_i$. For matrix $P\in \mathbb R^{B\times C}$, the nuclear norm is defined as $\|P\|_* = Tr(\sqrt{P^TP}) = \sum_{i=1}^{\min{(B,C)}}\sigma_i$, where $\sigma_i$ is the $i$-th singular value of $P$. 

Since our work is highly related to maximum squares (MS) loss \cite{chen2019domain} and batch nuclear norm maximization (BNM) loss \cite{cui2020towards}, we revisit their formulations from the work \cite{chen2019domain,cui2020towards} here for future use. The MS loss is defined as,
\begin{equation}
    \label{equ:ms}
    MS(P) = - \frac{1}{B}\sum\limits_{i = 1}^B {\sum\limits_{c = 1}^C {{{\left( {{P_{ic}}} \right)}^2}} },
\end{equation}

and the BNM loss is defined as,
\begin{equation}
    \label{equ:bnm}
    BNM(P) = - \frac{1}{B}{\|P\|_* }.
\end{equation}

Both MS and BNM losses are simple but show the power to enhance transferability of UDA. In contrast to MS, BNM exhibits strong discriminability but has deficient optimization process. Instead, MS has efficient implementation. But they still have some theoretical limits on the power. Towards this end, this section will introduce a new property termed \emph{equity}, which lays the underlying foundation about the efficacy of nuclear norm maximization theoretically.

\subsection{Equity: A New Property of Nuclear Norm Maximization}
\begin{table}
\centering
\caption{
The numbers of categories, matrix ranks, and nuclear norms of three examples of prediction matrix.
\label{tab:diversity_example}}
\begin{tabular}{cccc}
\hline
Prediction&\#Category&Matrix rank&Nuclear norm\\
\hline
$P_1$&1&1&2\\
$P_2$&2&2&2.73\\
$P_3$&2&2&2.82\\
\hline
\end{tabular}
\end{table}
In the study \cite{cui2020towards}, the diversity is a key concept which can be measured by the number of predicted categories and approximated by the matrix rank. Since matrix rank is non-convex and hard to optimize, its convex envelope nuclear norm is used instead. However, the behaviors of the nuclear norm not only contain the diversity and discriminability but also embrace the new working mechanism, i.e., the \textit{equity}. \newtheorem{myDef}{Definition}
For convenience, this paper defines a specific equity constraint for the purpose of guiding us to design two types of the equity constraints in the subsequent sections.
\begin{myDef}
({\bf \emph{The equity}}) A classification model is called the \emph{equity}, if its corresponding target prediction $P\in R^{B\times C}$ meets the following condition:
\begin{align}
n_p=n_q, \forall p\ne q\\
\sum\limits_{c=1}^C n_c =B\label{definition}
\end{align}
where $n_p=\sum\limits_{i=1}^{B} P_{ip}$ is the number of samples belong to the class $p$, and $B$ and $C$ denote batch size 
and classes, respectively.
\end{myDef}

Note that \eqref{definition} always holds due to the property of the probability matrix $P$, thus it can be removed from the definition above. For clarity, we keep it there. Obviously, the equity indicates that the number of each predicted class output by a classifier is (near) identical.

Literally, the diversity merely recognizes the inclusion of different classes without caring about the class-size fairness, while the equity additionally emphasizes the total class-size balance degree. To illustrative this difference, we define three matrices as follows,
\begin{equation}
{P_1} = \left[ {\begin{array}{*{2}{c}}
1&0\\
1&0\\
1&0\\
1&0\\
\end{array}} \right],
{P_2} = \left[ {\begin{array}{*{2}{c}}
1&0\\
1&0\\
1&0\\
0&1\\
\end{array}} \right],
{P_3} = \left[ {\begin{array}{*{2}{c}}
1&0\\
1&0\\
0&1\\
0&1\\
\end{array}} \right],
\end{equation}
and compute their numbers of categories, matrix ranks, and nuclear norms in Table \ref{tab:diversity_example}. Note that although nuclear norm is considered to encode both discriminability and diversity, since the discriminability of $P_1$, $P_2$, and $P_3$ are identical to each other, nuclear norm here only measures diversity.
It can be seen that, in terms of the number of categories and matrix rank, $P_2$ and $P_3$ have the same diversity, and they are more diverse than $P_1$. This conforms the literal meaning of diversity and the original intention of usage of diversity. Nevertheless, if using nuclear norm to measure the diversity, the diversity of $P_3$ should equal to that of $P_2$. In reality, the nuclear norm of $P_3$ is higher than that of $P_2$. Obviously, the difference between the nuclear norms of both $P_2$ and $P_3$ could indicates the other property in addition to predictive diversity. By comparing their norms, the predicted classes of $P_3$ is more balanced than that of $P_2$. That is, in this case nuclear norm is most likely to measure how \textit{balanced} the predicted classes are, \emph{i.e.}, \textit{equity}.

In empirical studies, we find that maximizing nuclear norm of predictions always enforces the predicted classes to be balanced. To support this claim, we offer the following theory analysis. {\bf Theorem \ref{theo:nuclear_norm_cws}} shows that the matrix $P$ with one-hot rows and balanced class sizes is an optimal solution to maximizing $\|P\|_*$. This confirms that it is more proper to interpret the working mechanism of nuclear norm maximization as the equity maximization.

\begin{theorem}
\label{theo:nuclear_norm_cws}
There exists one optimal solution to maximizing $\|P\|_*$ with one-hot rows and balanced class sizes.
\end{theorem}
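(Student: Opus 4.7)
My plan is to attack this by combining a two-step upper bound on $\|P\|_*$ with an explicit construction that attains it. First, I would apply Cauchy--Schwarz to the singular values: $\|P\|_* = \sum_{i=1}^{\min(B,C)} \sigma_i \le \sqrt{\min(B,C)}\,\sqrt{\sum_i \sigma_i^2} = \sqrt{\min(B,C)}\,\|P\|_F$. Next I would bound the Frobenius norm using row-stochasticity: since $P_{ic}\in[0,1]$ (from $P_{ic}\ge 0$ and $\sum_c P_{ic}=1$) we have $P_{ic}^2\le P_{ic}$, hence $\|P\|_F^2 = \sum_{i,c}P_{ic}^2 \le \sum_{i,c}P_{ic} = B$. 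Chaining these yields the a priori bound $\|P\|_* \le \sqrt{B\cdot\min(B,C)}$ for every feasible $P$.

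The second step is to identify when equality is possible and exhibit such a $P$. Equality in the Frobenius bound forces $P_{ic}^2 = P_{ic}$ at every entry, which, together with the row-sum constraint, means each row of $P$ must be one-hot. Equality in the Cauchy--Schwarz step forces all nonzero singular values to be equal and the rank to hit $\min(B,C)$. In the standard UDA regime $C\le B$ (and, for cleanliness, $C\mid B$), I would construct $P^*$ by placing exactly $B/C$ rows equal to the one-hot vector $e_c$ for each class $c$. A direct computation gives $P^{*\top}P^* = (B/C)\,I_C$, so all $C$ singular values of $P^*$ equal $\sqrt{B/C}$, and $\|P^*\|_* = C\sqrt{B/C} = \sqrt{BC}$, matching the upper bound. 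Thus $P^*$ is optimal, has one-hot rows by construction, and satisfies $n_c = B/C$ for every $c$, i.e., the equity condition of Definition~1.

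The main obstacle I foresee is not the inequality chain, which is routine, but verifying that the two equality conditions are mutually compatible, and understanding \emph{why} balance of class sizes falls out. Once the rows are one-hot, $P^{\top}P$ reduces to $\mathrm{diag}(n_1,\dots,n_C)$, whose nonzero eigenvalues are exactly the class sizes $n_c$; saturating Cauchy--Schwarz then demands that these nonzero $n_c$ all coincide, which is precisely the equity requirement. This is what makes the statement meaningful rather than tautological. If $C\nmid B$, I would replace $P^*$ by the nearest-balanced variant with $\lfloor B/C\rfloor$ or $\lceil B/C\rceil$ rows per class; the structural conclusion---existence of a maximizer with one-hot rows and (near-)balanced classes---is unchanged.
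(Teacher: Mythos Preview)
Your approach via the inequality chain $\|P\|_* \le \sqrt{\min(B,C)}\,\|P\|_F \le \sqrt{B\min(B,C)}$ is clean and genuinely different from the paper's. The paper instead invokes a convexity/extreme-point argument: since the nuclear norm is convex and the feasible set is a compact polytope, the maximum is attained at a vertex, i.e.\ at a matrix with one-hot rows; it then computes $\|P\|_* = \sum_c \sqrt{n_c}$ directly for such matrices and maximizes this under $\sum_c n_c = B$. Your route is more self-contained when the bound is tight (namely $B\le C$, or $C\le B$ with $C\mid B$), and it makes the role of the two equality cases transparent: one-hot rows saturate the Frobenius step, and equal nonzero $n_c$ saturate Cauchy--Schwarz.

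However, there is a real gap in the non-divisible regime $C<B$, $C\nmid B$. There your upper bound $\sqrt{BC}$ is \emph{strictly} unattainable: no one-hot matrix has all nonzero $n_c$ equal, and by your own equality analysis no non-one-hot matrix can saturate the Frobenius step. So exhibiting the near-balanced $P^*$ with $n_c\in\{\lfloor B/C\rfloor,\lceil B/C\rceil\}$ tells you nothing about its optimality---you have a candidate and a ceiling, but a gap between them. The sentence ``the structural conclusion is unchanged'' is precisely where the proof is missing. To close it you need some device to rule out non-one-hot maximizers; the paper's extreme-point observation does this in one line, after which your own computation $P^\top P=\mathrm{diag}(n_1,\dots,n_C)$ together with concavity of $t\mapsto\sqrt{t}$ finishes the job. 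Since in the paper's experiments $C\nmid B$ is the typical situation ($B=36$ with $C\in\{12,31,65\}$), this is not a corner case you can dismiss.
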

\begin{proof}

Since nuclear norm is convex and the feasible region of $P$ is a convex and compact set, according to  Theorem 7.42 in \cite{beck2014introduction}, at least one optimal solution to maximizing $\|P\|_*$ is attained at extreme points which are matrices with one-hot rows.

Next we search the optimal solution from these extreme points.
Assuming that the rows of matrix $P\in \mathbb R^{B\times C}$ are one-hot, it can be verified that
\begin{equation}
P^{T}P=\Sigma\Sigma,
\end{equation}
where $\Sigma\in \mathbb R^{C\times C}$ is a diagonal matrix with diagonal entries as $\Sigma_{cc}=\sqrt{n_c}$, wherein ${n_c} = \sum_{i = 1}^B {{P_{ic}}} $.
According to the definition of nuclear norm, there is,
\begin{equation}
\label{equ:nuclear_norm_nc}
\|P\|_*=\textit{Tr}\left(\sqrt{P^TP}\right)=\textit{Tr}\left(\Sigma\right)=\sum_{c=1}^C\sqrt{n_c}.
\end{equation}

Then, the optimization problem maximizing $\|P\|_*$ can be cast as:
\begin{equation}
\label{equ:nuclear_norm_one_hot}
\mathop {\max }\limits_{{n_c} \in \mathbb{N}_0} \sum_{c = 1}^C \sqrt{n_c} \;\;\;s.t.\sum_{c = 1}^C {{n_c}}  = B,
\end{equation}
where $\mathbb N_0$ represents non-negative integers. The optimal solution to \eqref{equ:nuclear_norm_one_hot} can be obtained by $n_c^* \in \{ {\lfloor {B/C} \rfloor ,\lceil {B/C} \rceil } \}$, and the number of $\left\lfloor {B/C} \right\rfloor $ denoted by $m$ is given by solving
\begin{equation}
\label{equ:compute_m}
m\left\lfloor {B/C} \right\rfloor  + (C - m)\left\lceil {B/C} \right\rceil  = B.
\end{equation}

This result shows that the optimal solution to maximizing $\|P\|_*$ has balanced class sizes.

\end{proof}

\subsection{A Discriminability-and-Equity Maximization Paradigm for UDA}
Inspired by the above-mentioned analysis about the working mechanism of nuclear norm maximization, we propose a discriminability-and-equity maximization paradigm to solve UDA and explore other implementations to verify its effectiveness.
In the following, we will give a possible explanation why discriminability and equity maximization works.

Recall that previous SSL based UDA methods, such as entropy minimization \cite{grandvalet2005semi} and maximum squares \cite{chen2019domain}, only encourage prediction discriminability but do not enforce any constraint with respect to the equity. Thus, the predicted classes could form any distribution which may differ from the ground truth. The difference between distributions could hurt the model performance on target domain. By enforcing an appropriate equity constraint, the predictions could have the opportunity to become more close to ground truth so as to promote adaptation performance.

Based on the above-mentioned analysis, we will design two novel loss functions with explicit equity constraints in the frame of discriminability-and-equity maximization paradigm for UDA in the following sections. In this paper we primarily focus on verify the
feasibility of the equity and its efficacy for UDA in order to explicitly observe the significance of the proposed alternative equity constraints for UDA.
This could help motivate the practitioners to design more sound equity constraints.

Towards this goal, we realize the equity constraint from the viewpoint of class and sample levels, respectively. From class level, we expect the contribution of each class to the model to be equivalent whatever the class is large or small. A simple solution is reweighing the classes by dividing the sample number of each class (see Section \emph{Class weighted squares maximization}). In view of the sample level, the equity constraint enforces the number of all the samples into the same class to be uniform with the class size of each sample (see Section \emph{Normalized Squares Maximization}). The details are described as below.

\subsection{Class Weighted Squares Maximization}
We choose the maximum squares loss \eqref{equ:ms} \cite{chen2019domain} as the base loss to achieve discriminability maximization due to its simplicity and effectiveness.
To realize the equity constraint, one intuitive solution is to introduce the weights to reweigh class size based on the idea that the small classes have large weights while the large classes have small weights.
Accordingly, we define the class weighted squares (CWS) as follows:
\begin{equation}
\label{equ:cws}
CWS\left( P \right) = \frac{1}{C}\sum\limits_{c = 1}^C {\frac{{\sum\limits_{i = 1}^B {{{( {{P_{ic}}} )}^2}} }}{(\sum\limits_{i = 1}^B {{P_{ic}}})^r}},
\end{equation}
where $r$ is a positive parameter to be discussed in the following. According to Equ. \eqref{equ:cws}, the \textbf{class weighted squares maximization} (CWSM) can be defined as:
\begin{equation}
\label{equ:cwsm}
\textit{CWSM}(P) = -\textit{CWS}(P).
\end{equation}

In \eqref{equ:cws}, since $P_{ic}$ represents the probability of the sample $i$ belonging to class $c$, $\sum_{i = 1}^B {{P_{ic}}}$ can be viewed as the soft class size of the $c$-th class. Compared with the squares loss in \eqref{equ:ms}, the introduced weight $\frac{1}{(\sum_{i=1}^BP_{ic})^r}$ for each class is inversely proportional to the class size. Hence, maximizing CWS leads to discriminative and balanced prediction results. Actually, this point has been verified by the theoretical analysis given by \textbf{Theorem \ref{theo:CWSM}} and \textbf{Theorem \ref{theo:cwsm_bal}}. \textbf{Theorem \ref{theo:CWSM}} and \textbf{Theorem \ref{theo:cwsm_bal}} respectively show that the maximum of CWS is attained at the feasible solution with maximal discriminability and equity.

The parameter $r$ in \eqref{equ:cws} is to control the influence of the equity constraint on the whole loss function, i.e., balance discriminability and equity. The larger the $r$ is, the more intensive the equity constraint is. Here, we empirically restrict $0\le r \le  1$. When $r=0$, CWS becomes the squares loss with the only difference of a constant coefficient. In this case, there is no any constraint on balancing classes. When $r=1$, the class weight is the inverse of class size, which has been widely used to tackle the class imbalance problem in supervised learning.
A moderate $r$ means relatively balanced trade-off between discriminability and equity.
A case study and the experimental analysis on the effect of the parameter $r$ will be shown in Sec. \ref{sec:case_study} and Sec. \ref{sec:par_ana}, respectively.


Note that a similar formulation (Equ. (13) in \cite{chen2019domain}) to ours has been proposed to deal with the class imbalance problem. Nevertheless, the proposed CWSM differs from the approach of \cite{chen2019domain} in three aspects.
Firstly, Equ. (13) in \cite{chen2019domain} computes the class frequency using the pseudo labels, but CWSM use the soft probability prediction to compute the soft class size. Soft probabilities tend to contain more information and are less sensitive to noises than hard pseudo labels. Secondly, CWSM unifies discriminability and equity maximization in a overall learning process. In contrast, the method in \cite{chen2019domain} consists of two separate steps: obtaining pseudo labels and re-weighting classes based on class frequencies, where the
first step is non-differentiable. From the perspective of methodology, the  approach in \cite{chen2019domain} restricts its idea in the scope of the  class imbalance problem in supervised learning. In contrast, our method does not need pseudo labels and are more elegant for unsupervised learning scenario. At last, the theoretical analysis about our method provides solid theoretical foundations.

\begin{theorem}
\label{theo:CWSM}
(\textbf{The discriminability}) When $0<r<1$, one necessary condition for the optimal solution $P^*$ to CWSM is that each row of $P^*$ is one-hot.
\end{theorem}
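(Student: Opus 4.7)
The plan is to exploit separability and strict convexity: fix all rows of $P$ except one, and show that the resulting function of the free row is strictly convex on the probability simplex $\Delta_C$. Since a strictly convex function on a convex polytope attains its maximum only at extreme points of the polytope, and the extreme points of $\Delta_C$ are exactly the one-hot vectors $e_1,\dots,e_C$, this forces every row of any $P^*$ maximizing $CWS$ to be one-hot.

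More concretely, I would first pick an arbitrary row index $i_0$ and freeze the other $B-1$ rows. Letting $S_c = \sum_{i\ne i_0} P_{ic}$ and $T_c = \sum_{i\ne i_0} P_{ic}^2$ denote the contributions of the fixed rows, and writing $x_c = P_{i_0 c}$, the objective $CWS(P)$ reduces (up to the multiplicative constant $1/C$) to the separable function $F(x) = \sum_{c=1}^C f_c(x_c)$ with $f_c(x_c) = (T_c + x_c^2)/(S_c + x_c)^r$, to be maximized over $x$ in the probability simplex $\Delta_C$.

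The core step is then to show that each $f_c$ is strictly convex on $[0,1]$ when $0<r<1$. I would do this by computing $f_c''$ and checking that its numerator, after clearing the positive denominator $(S_c + x_c)^{r+2}$, is a polynomial in $x_c$ whose coefficients depend on $1-r$, $2-r$, $S_c$, and $T_c$ and are all non-negative, with a strictly positive constant term whenever $S_c>0$ or $T_c>0$. The only degenerate case $S_c = T_c = 0$ reduces $f_c$ to $x_c^{2-r}$, and since $0<r<1$ gives exponent in $(1,2)$, this is strictly convex on $[0,1]$ by inspection. With each summand strictly convex in its own variable, separability gives strict convexity of $F$ on $\Delta_C$, hence the maximizer of $F$ over $\Delta_C$ is a unique one-hot vector. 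Applying this argument to every row index $i_0$ yields the claim.

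The main obstacle is the explicit second-derivative computation and the sign analysis behind strict convexity of $f_c$, which is exactly where the hypothesis $0<r<1$ is used: at $r=0$ the equity term disappears and $f_c$ becomes a constant-plus-square (only weakly convex at the relevant vertices in a way that allows non-one-hot maximizers), while at $r=1$ the analogous quadratic coefficient $(1-r)$ vanishes and strict convexity can fail on the boundary. The degenerate subcase $S_c = T_c = 0$ also needs its own direct argument rather than the generic computation. Once these points are settled, the passage from "strictly convex maximum on a polytope lies at an extreme point" to "one-hot rows" is standard convex analysis.
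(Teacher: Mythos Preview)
Your proposal is correct and follows essentially the same route as the paper: freeze all rows but one, reduce to a separable function $\sum_c f_c(x_c)$ with $f_c(x_c)=(T_c+x_c^2)/(S_c+x_c)^r$ (the paper's $a_c,b_c$ are your $S_c,T_c$), compute the second derivative to establish strict convexity on $[0,1]$, treat the degenerate case $S_c=T_c=0$ by noting $f_c(x_c)=x_c^{2-r}$, and conclude via the extreme-point principle for strictly convex maximization on a polytope. One small slip: strict convexity guarantees the maximum over $\Delta_C$ is attained \emph{only} at vertices, not at a \emph{unique} vertex; but uniqueness is not needed for the claim.
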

\begin{proof}
For simplicity, we omit the constant coefficient $\frac{1}{C}$ in CWS, because the constant does not affect the final conclusion. In the following, we will prove this theorem using a proof by contradiction. Accordingly, suppose that $P'$ is an optimal solution to CWSM, and its $k$-th row ${P'_k}$ is not one-hot.

Let $x$ be a column vector with length $C$, and define a function $f(x)$ associated with $P'$ as:
\begin{equation}
\label{equ:fx}
f(x)=CWS(P'|_{k\rightarrow x}),
\end{equation}
where $P'|_{k\rightarrow x}$ represents the matrix by substituting the $k$-row of $P'$ with $x^T$. To make $P'|_{k\rightarrow x}$ valid, the definition domain of $f$ should meet $\{x\in\mathbb R^{C}\colon x_c\ge 0 \text{ for } 1\le c \le C  \text{ and } \sum_{c=1}^C x_c=1\}$, which is obviously convex, according to the definition of the convex set. Based on \eqref{equ:cws}, \eqref{equ:fx} can be rewritten as,
\begin{equation}
\label{equ:cws_pk}
f(x) = \sum\limits_{c = 1}^C {\frac{{\sum\limits_{i = 1,i \ne k}^B {{{\left( {{P'_{ic}}} \right)}^2}}  + {{\left( {{x_{c}}} \right)}^2}}}{({{\sum\limits_{i = 1,i \ne k}^B {{P'_{ic}}}  + {x_{c}}}})^r}}.
\end{equation}

Letting $a_c=\sum_{i = 1,i \ne k}^B {{{P'}_{ic}}}$, and $b_c=\sum_{i = 1,i \ne k}^B {{{\left( {{{P'}_{ic}}} \right)}^2}}$, \eqref{equ:cws_pk} can be rewritten as:
\begin{equation}
f(x) = \sum\limits_{c = 1}^C {\frac{{{b_c} + x_c^2}}{{({{a_c} + {x_c}})^r }}}.
\end{equation}

By derivation, the Hessian matrix of $f(x)$ denoted by $H\in \mathbb R^{C\times C}$ is a diagonal matrix and its diagonal entries $H_{cc}$ for $1\le c \le C$ are,
\begin{equation}
    {H_{cc}} = \frac{{(1-r)(2-r)x_c^2 + 4(1-r)a_cx_c + 2a_c^2 + r(1+r)b_c}}{{{{\left( {{a_c} + {x_c}} \right)}^{r+2}}}}.
\end{equation}

Recall that $a_c\ge 0$, $b_c\ge 0$, and $0 \le x_c \le 1$, we will discuss the range of $H_{cc}$ in two cases: $a_c > 0$, and $a_c = 0$.

1) For $a_c> 0$, it is obvious that $H_{cc}> 0$.

2) For $a_c = 0$, by the notation, all the $P'_{ic}$ with $i\ne k$ equal zeros, which imply that $b_c = 0$. Then, there is,
\begin{equation}
H_{cc} = \frac{(1-r)(2-r)}{x_c^r} > 0.
\end{equation}

In light of both cases 1) and 2), there holds $H_{cc}> 0$. Hence, $H$ is positive definite, thus $f(x)$ is strictly convex.

Denote the standard orthogonal basis of $\mathbb   R^C$ as $\{e_1, ..., e_C\}$, where the $c$-th entry of $e_c$ is 1, and the other entries are 0s. For concise, let $v=P'_k$. Then, $v$ can be rewritten as,
\begin{equation}
v=\sum\limits_{c=1}^C{v_ce_c}.
\end{equation}

Let $z$ represent the number of non-zero elements of $v$, resort $v$ such that the first $z$ elements are greater than 0 and the last $C-z$ elements are equal to 0, and define the map from the new index $i$ of the reordered vector to the original index as $\pi _i$, i.e.,
$v_{\pi_i}>0$ for $1\le i \le z$, and $v_{\pi_i}=0$ for $z+1\le i \le C$. Obviously, there is,
\begin{equation}
v=\sum\limits_{c=1}^C{v_{\pi_c}e_{\pi_c}}.
\end{equation}

Since $v$ is not one-hot, there is $z\ge 2$ and $0<v_{\pi_i}<1$ for $1\le i \le z$. Recalling that $f(x)$ is strictly convex, there is,
\begin{equation}
\label{equ:cov_max_1}
\begin{array}{l}
\;\;\;\;f({v}) \\= f( {{v_{{\pi _1}}}{e_{{\pi _1}}} + (1 - {v_{{\pi _1}}})\sum\limits_{c = 2}^C {\frac{{{v_{{\pi _c}}}{e_{{\pi _c}}}}}{{1 - {v_{{\pi _1}}}}}} } )\\
 < {v_{{\pi _1}}}f({e_{{\pi _1}}}) + (1 - {v_{{\pi _1}}})f(\sum\limits_{c = 2}^C {\frac{{{v_{{\pi _c}}}{e_{{\pi _c}}}}}{{1 - {v_{{\pi _1}}}}}} )\\
 \le \max \{ {f({e_{{\pi _1}}}),f(\sum\limits_{c = 2}^C {\frac{{{v_{{\pi _c}}}{e_{{\pi _c}}}}}{{1 - {v_{{\pi _1}}}}}} )} \},
\end{array}
\end{equation}
where the first equality is based on the definition of strictly convex function, and ${\sum_{c = 2}^C {\frac{{{v_{{\pi _c}}}{e_{{\pi _c}}}}}{{1 - {v_{{\pi _1}}}}}} }$ is in the feasible definition region because $\sum_{c = 2}^C {\frac{{{v_{{\pi _c}}}}}{{1 - {v_{{\pi _1}}}}}}  = 1$. Moreover, for $1<i<z$, there is,
\begin{equation}
\label{equ:cov_max_i}
\begin{array}{l}
\;\;\;\;f(\sum\limits_{c = i}^C {\frac{{{v_{{\pi _c}}}{e_{{\pi _c}}}}}{{1 - \sum\limits_{j = 1}^{i - 1} {{v_{{\pi _i}}}} }}} ) \\
= f( {\frac{{{v_{{\pi _i}}}{e_{{\pi _i}}}}}{{1 - \sum\limits_{j = 1}^{i - 1} {{v_{{\pi _j}}}} }} + ( {1 - \frac{{{v_{{\pi _i}}}}}{{1 - \sum\limits_{j = 1}^{i - 1} {{v_{{\pi _j}}}} }}} )\sum\limits_{c = i + 1}^C {\frac{{{v_{{\pi _c}}}{e_{{\pi _c}}}}}{{1 - \sum\limits_{j = 1}^i {{v_{{\pi _j}}}} }}} } )\\
 < \frac{{{v_{{\pi _i}}}}}{{1 - \sum\limits_{j = 1}^{i - 1} {{v_{{\pi _j}}}} }}f({e_{{\pi _i}}}) + (1 - \frac{{{v_{{\pi _i}}}}}{{1 - \sum\limits_{j = 1}^{i - 1} {{v_{{\pi _j}}}} }})f(\sum\limits_{c = i + 1}^C {\frac{{{v_{{\pi _c}}}{e_{{\pi _c}}}}}{{1 - \sum\limits_{j = 1}^i {{v_{{\pi _j}}}} }}} )\\
 \le \max \{ f({e_{{\pi _i}}}),f(\sum\limits_{c = i + 1}^C {\frac{{{v_{{\pi _c}}}{e_{{\pi _c}}}}}{{1 - \sum\limits_{j = 1}^i {{v_{{\pi _j}}}} }}} )\},
\end{array}
\end{equation}
where $0<{\frac{{{v_{{\pi _i}}}}}{{1 - \sum_{j = 1}^{i - 1} {{v_{{\pi _j}}}} }}}<1$, and ${\sum_{c = i + 1}^C {\frac{{{v_{{\pi _c}}}}}{{1 - \sum_{j = 1}^i {{v_{{\pi _j}}}} }}} }=1$. According to the previous definition of $z$, there is,
\begin{equation}
\label{equ:cov_max_z}
f(\sum\limits_{c = z}^C {\frac{{{v_{{\pi _c}}}{e_{{\pi _c}}}}}{{1 - \sum\limits_{j = 1}^{z - 1} {{v_{{\pi _j}}}} }}} ) = f( {{e_{{\pi _z}}}}).
\end{equation}

By combining \eqref{equ:cov_max_1}, \eqref{equ:cov_max_i}, and \eqref{equ:cov_max_z}, there holds,
\begin{equation}
\label{equ:pk_max}
f({v}) < \max \{ f( {{e_{{\pi _1}}}}),...,f( {{e_{{\pi _z}}}})\}.
\end{equation}

Letting
\begin{equation}
c^* = \arg\max_{1\le c \le z} f( {{e_{{\pi _c}}}}),
\end{equation}
it can be used to construct a new matrix $P^*=P'|_{k\rightarrow e_{\pi _{c^*}}}$, i.e., substituting the $k$-row of $P'$ with $e_{\pi _{c^*}}^T$. According to \eqref{equ:pk_max}, there is $CWS( {{P^*}}) > CWS( {P'})$. This contradicts the supposition that $P'$ is an optimal solution.
\end{proof}

\begin{theorem}
\label{theo:cwsm_bal}
(\textbf{The equity}) When $0<r<1$, the optimal solution to CWSM has balanced class sizes.
\end{theorem}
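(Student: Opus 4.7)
The plan is to leverage Theorem \ref{theo:CWSM} to restrict attention to prediction matrices with one-hot rows, and then use the strict concavity of $t^{1-r}$ on $[0,\infty)$ to force the resulting class-size vector to be balanced. First I would invoke Theorem \ref{theo:CWSM}: at any optimum $P^*$ every row is a standard basis vector, so $P^*_{ic}\in\{0,1\}$ and $\sum_i (P^*_{ic})^2 = \sum_i P^*_{ic} = n_c$. Substituting into \eqref{equ:cws} collapses the loss to
\begin{equation*}
CWS(P^*) \;=\; \frac{1}{C}\sum_{c=1}^C \frac{n_c}{n_c^{r}} \;=\; \frac{1}{C}\sum_{c=1}^C n_c^{\,1-r},
\end{equation*}
with the continuous extension $n_c^{1-r}=0$ at $n_c=0$, which is consistent with the limit of the original quotient since $\sum_i P_{ic}^2 \le \sum_i P_{ic}$ for $P_{ic}\in[0,1]$.

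The reduced problem is then
\begin{equation*}
\max_{n_c\in\mathbb{N}_0}\;\sum_{c=1}^C n_c^{\,1-r}\qquad\text{s.t.}\qquad \sum_{c=1}^C n_c = B,
\end{equation*}
which I would solve by a discrete swap argument powered by strict concavity. Because $0<1-r<1$, the forward-difference $\Delta(t) = (t+1)^{1-r} - t^{1-r}$ is strictly decreasing on $[0,\infty)$. Consequently, if some feasible allocation has $n_p \ge n_q + 2$, then $n_p-1 > n_q$ yields
\begin{equation*}
(n_q+1)^{1-r} - n_q^{1-r} \;=\; \Delta(n_q) \;>\; \Delta(n_p-1) \;=\; n_p^{1-r} - (n_p-1)^{1-r},
\end{equation*}
so replacing $(n_p,n_q)$ by $(n_p-1,n_q+1)$ preserves feasibility and strictly increases the objective. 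Iterating such swaps terminates with $|n_p-n_q|\le 1$ for all $p,q$, i.e.\ $n_c^* \in \{\lfloor B/C\rfloor,\lceil B/C\rceil\}$ with multiplicities fixed by \eqref{equ:compute_m}, recovering exactly the balanced configuration of Theorem \ref{theo:nuclear_norm_cws}.

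The main obstacle is essentially bookkeeping rather than anything deep. One must justify the $0/0$ convention used above in \eqref{equ:cws} (the continuous extension keeps the objective well-defined and preserves strict concavity on the closed feasible simplex), and confirm that $\Delta$ is strictly decreasing all the way down to the boundary $t=0$. The latter holds because $\Delta(0)=1$ while the mean value theorem gives $\Delta(t) = (1-r)\xi^{-r}$ for some $\xi\in(t,t+1)$, which is strictly less than $1-r<1$ whenever $t\ge 1$, so the strict inequality in the swap argument survives even when $n_q=0$. Once these technicalities are cleared, Theorem \ref{theo:CWSM} combined with strict concavity of $t^{1-r}$ delivers the equity claim in a single step.
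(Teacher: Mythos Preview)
Your proposal is correct and follows essentially the same route as the paper: invoke Theorem \ref{theo:CWSM} to reduce to one-hot rows, collapse $CWS$ to $\frac{1}{C}\sum_c n_c^{1-r}$, and then maximize $\sum_c n_c^{1-r}$ over nonnegative integers summing to $B$. The paper simply asserts the balanced optimum $n_c^*\in\{\lfloor B/C\rfloor,\lceil B/C\rceil\}$ without further argument, whereas your swap argument via the strictly decreasing forward difference of $t^{1-r}$ supplies the justification the paper omits.
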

\begin{proof}
Based on {\bf Theorem \ref{theo:CWSM}}, we search the optimal solution from the space within which every matrix consists of one-hot rows. Accordingly, \eqref{equ:cws} can be rewritten as:
\begin{equation}
\label{eq:cws_one_hot}
\textit{CWS}(P) = \frac{1}{C}\sum_{c=1}^Cn_c^{1-r},
\end{equation}
where ${n_c} = \sum_{i = 1}^B {{P_{ic}}} $. Then, the CWSM problem can be recast as:
\begin{equation}
\label{equ:cws_one_hot}
\mathop {\max }\limits_{{n_c} \in \mathbb{N}_0} \sum_{c = 1}^C {n_c^{1-r}} \;\;\;s.t.\sum_{c = 1}^C {{n_c}}  = B,
\end{equation}
where $\mathbb{N}_0$ stands for non-negative integers. Since $0<1-r<1$, the optimal solution $n_c^*$ to \eqref{equ:cws_one_hot} can be obtained by $n_c^* \in \{ {\lfloor {B/C} \rfloor ,\lceil {B/C} \rceil } \}$, and the number of $\left\lfloor {B/C} \right\rfloor $ denoted by $m$ is given by solving \eqref{equ:compute_m}.
This indicates that the optimal solution has balanced class sizes.
\end{proof}

\subsection{Normalized Squares Maximization}

In this section, we adopt another strategy to enforce the equity constraint. Different from CWSM which introduces class-balanced weights, we here consider to divide the square loss by a normalization term, and this indicates minimizing the normalization term leads to balanced classes. At first, we define the normalized squares (NS) as follows:
\begin{equation}
\label{equ:ns}
{\textstyle
\textit{NS}(P) =\frac{{\sum\limits_{i = 1}^B {\sum\limits_{c = 1}^C {{{( {{P_{ic}}} )}^2}} } }}{{\sum\limits_{i,j = 1,i \ne j}^B ({\sum\limits_{c = 1}^C {{P_{ic}}{P_{jc}}} })^r  + \alpha \sum\limits_{i = 1}^B {\sum\limits_{c = 1}^C {{{\left( {{P_{ic}}} \right)}^2}} } }} + \epsilon \sum\limits_{i = 1}^B {\sum\limits_{c = 1}^C {{{\left( {{P_{ic}}} \right)}^2}} },
}
\end{equation}
where $\alpha \ge 0$ is a parameter to avoid too large gradient near the optimal solution, $\epsilon$ is a small number for the convenience of theoretical analysis, and $0\le r \le 1$ is a parameter to tune the extent of the equity constraint. $\epsilon$ can be assigned by the rule that $\epsilon = 0$ for $B < C$ and $\epsilon = 10^{-6}$ for $B \ge C$. Then, the \textbf{normalized squares maximization} (NSM) loss can be formulated as:
\begin{equation}
\textit{NSM}(P) =  -\textit{NS}(P).
\end{equation}

Like $r$ in \eqref{equ:cws}, $r$ in \eqref{equ:ns} has the similar effect on controlling the extent of equity constraint, or balancing discriminability and equity. When $r=0$, \eqref{equ:ns} becomes
\begin{equation}
\label{equ:ns_c1}
\textit{NS}(P) = \frac{1}{\alpha}(1-\frac{B^2-B}{B^2-B+\alpha {\sum\limits_{i = 1}^B {\sum\limits_{c = 1}^C {{{\left( {{P_{ic}}} \right)}^2}} } } }) + \epsilon {\sum\limits_{i = 1}^B {\sum\limits_{c = 1}^C {{{\left( {{P_{ic}}} \right)}^2}} } }.
\end{equation}

In this case, NSM has the similar optimization behaviors with MaxSquare because there is $\textit{NSM}(P_1)>\textit{NSM}(P_2)$  if $\textit{MS}(P_1)>\textit{MS}(P_2)$. This means that there is no any constraint on equity when $r=0$.

When $r=1$,
\eqref{equ:ns} can be rewritten as follows for better demonstration:
\begin{equation}\label{equ:ns_essence}
\textit{NS}(P) = \frac{1}{{\frac{{\sum\limits_{c = 1}^C {{( {\sum\limits_{i = 1}^B {{P_{ic}}} } )^2}} }}{{\sum\limits_{i = 1}^B {\sum\limits_{c = 1}^C {{{\left( {{P_{ic}}} \right)}^2}} } }} + \alpha  - 1}} + \varepsilon \sum\limits_{i = 1}^B {\sum\limits_{c = 1}^C {{{\left( {{P_{ic}}} \right)}^2}} }.
\end{equation}

Based on \eqref{equ:ns_essence}, maximizing normalized squares has two-fold effects: maximizing $\sum_{i = 1}^B {\sum_{c = 1}^C {{{( {{P_{ic}}})}^2}} }$ and minimizing $\sum_{c = 1}^C {{{( {\sum_{i = 1}^B {{P_{ic}}} } )}^2}}$. Maximizing $\sum_{i = 1}^B {\sum_{c = 1}^C {{{( {{P_{ic}}})}^2}} }$ encourages predictive discriminability as shown in \cite{chen2019domain}. Considering that $\sum_{i = 1}^B {{P_{ic}}}$ represents the soft class size of the $c$-th class, minimizing $\sum_{c = 1}^C {{{( {\sum_{i = 1}^B {{P_{ic}}} } )}^2}}$ enforces the class sizes to be balanced so as to encourage predictive equity. Accordingly, we
qualitatively conclude that NSM enlarges discriminability and equity when $r=1$. In fact, the conclusion about the discriminability and equity can be guaranteed by \textbf{Theorem \ref{theo:1}} and \textbf{Theorem \ref{theo:2}}.
Moreover, when $r=1$, $\alpha=1$ and $\epsilon=0$, \eqref{equ:ns_essence} becomes:
\begin{equation}
NS(P) = \frac{{\sum\limits_{i = 1}^B {\sum\limits_{c = 1}^C {{{\left( {{P_{ic}}} \right)}^2}} } }}{{\sum\limits_{c = 1}^C {{{(\sum\limits_{i = 1}^B {{P_{ic}}} )^2}}} }},
\end{equation}
where the denominator is actually the sum of squared class sizes. This special case helps us recognize the mechanism of NSM and our motivation.

When $0<r<1$, the intensity of the equity constraint increases as $r$ rises. The empirical study of a special case for the influence of $r$ is given in Sec. \ref{sec:case_study}.
The theoretical analysis of NSM given by \textbf{Theorem \ref{theo:3}} shows that NSM simultaneously encourages the discriminability and equity under the situation of $0<r<1$ and $B \le C$.

\textbf{Probability Explanation.} Here we attempt to interpret NSM from the probability perspective. Considering the fact that $P_{ic}$ represents the probability of that sample $i$ belongs to class $c$, $\sum_{c=1}^C P_{ic}P_{jc}$ represents the probability of that sample $i$ and sample $j$ belong the same class. Moreover, $\frac{1}{B}\sum_{i=1}^B\sum_{c=1}^C P_{ic}^2$ represents the probability of that any sample belongs to a single class and $\frac{1}{B^2-B}\sum_{i,j=1,i\ne j}^B\sum_{c=1}^C P_{ic}P_{jc}$ represents the probability of that any two different sample belongs to the same class. Accordingly, maximizing $\sum_{i=1}^B\sum_{c=1}^C P_{ic}^2$ urges one sample to belong to only one class, i.e., encourage discriminability; minimizing $\sum_{i,j=1,i\ne j}^B\sum_{c=1}^C P_{ic}P_{jc}$ urges different samples to belong to different classes as soon as possible, i.e., encourage equity. The parameter $r$ re-scales the probability. According to the character of the function $y=x^r$ for $0\le x\le1$ and $0\le r\le1$, the smaller the $r$ is,
the larger the normalization term is, so the weaker the equity constraint is.

\begin{theorem}
\label{theo:1}
(\textbf{The discriminability}) When $r=1$, one necessary condition for the optimal solution $P^*$ to NSM is that each row of $P^*$ is one-hot.
\end{theorem}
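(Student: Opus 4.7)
I would mirror the proof of Theorem \ref{theo:CWSM} and argue by contradiction: suppose $P^*$ is NSM-optimal at $r=1$ but some row $k$, written $v := P^*_k$, is not one-hot. Freeze the other rows of $P^*$ and view the objective as a function $f(x) := NS(P^*|_{k\to x})$ of $x$ on the probability simplex. The goal is to show $f$ is strictly convex on this simplex; once that is in hand, the iterated-Jensen (cascade) argument of \eqref{equ:cov_max_1}--\eqref{equ:cov_max_z} transplants verbatim to produce an index $c^*$ in the support of $v$ with $f(e_{c^*}) > f(v)$, and replacing row $k$ of $P^*$ by $e_{c^*}^{\top}$ strictly increases $NS$, contradicting optimality.

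\textbf{Closed-form reduction.} Setting $a_c = \sum_{i\ne k}P^*_{ic}$ and collecting the constant contributions of the other rows into scalars $S_0, T_0$, the quantities $S(x) = \sum_{i,c}P_{ic}^2$ and $T(x) = \sum_{i\ne j, c}P_{ic}P_{jc}$ become $S(x) = S_0 + \|x\|^2$ (a convex quadratic with $\nabla^2 S = 2I$) and $T(x) = T_0 + 2\langle a, x\rangle$ (which is \emph{affine}: the exponent $r=1$ removes the remaining nonlinearity in the denominator of \eqref{equ:ns}). Consequently $Q(x) := T(x) + \alpha S(x)$ is a convex quadratic with $\nabla^2 Q = 2\alpha I$, and $f(x) = S(x)/Q(x) + \epsilon S(x)$ is a ratio of two positive convex quadratics plus a convex perturbation.

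\textbf{Main obstacle and conclusion.} The hard step is establishing strict convexity of $f$ on the simplex: ratios of convex quadratics are not convex in general (a simple univariate example is $(1+t^2)/(2+t+t^2)$ on large intervals of $t$), so a generic composition lemma is insufficient, and the proof must exploit the specific structure $\nabla^2 Q = \alpha\, \nabla^2 S$, the positivity of $S$ and $Q$ on the simplex, and the bounded range $\|x\|_1 = 1$. Concretely, along any line segment $x(t) = x_0 + t d$ inside the simplex, the restrictions $\sigma(t) := S(x(t))$ and $\chi(t) := Q(x(t))$ are scalar quadratics whose leading coefficients match up to the factor $\alpha$, so $\chi(t) - \alpha\,\sigma(t)$ is affine in $t$; computing $(\sigma/\chi)''(t)$ and using the simplex-induced bounds on $\sigma$, $\chi$, and $\chi - \alpha\sigma$ should yield $(\sigma/\chi)''(t) \ge 0$, with strict positivity supplied either by the perturbation term $\epsilon\sigma''$ when $B \ge C$ (so $\epsilon > 0$) or by the nontriviality of $d$ on $\mathrm{supp}(v)$ when $B < C$ (so $\epsilon = 0$). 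Once strict convexity is in place, the cascade decomposition $v = \sum_{c=1}^{z} v_{\pi_c}\, e_{\pi_c}$ with $z \ge 2$ (since $v$ is not one-hot), combined with iterated Jensen exactly as in \eqref{equ:cov_max_1}--\eqref{equ:cov_max_z}, delivers $f(v) < \max_{1\le c\le z} f(e_{\pi_c})$ and yields the contradiction.
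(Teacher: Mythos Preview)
Your strategy has a genuine gap: the restricted objective $f(x)=S(x)/Q(x)+\epsilon S(x)$ is \emph{not} in general convex on the simplex, let alone strictly convex, so the cascade-Jensen argument cannot be transplanted. A concrete counterexample with parameters the paper actually uses: take $B=C=2$, $\alpha=2$, and let the frozen row be $(1,0)$, so $a=(1,0)$, $S_0=1$, $T_0=0$. Parametrizing the free row by $t:=x_1\in[0,1]$ (so $x_2=1-t$) gives
\[
\sigma(t)=S(x(t))=2-2t+2t^2,\qquad \chi(t)=Q(x(t))=4-2t+4t^2,
\]
and a direct computation yields $(\sigma/\chi)''(0)=-\tfrac14<0$. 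The correction $\epsilon\,\sigma''(0)=4\epsilon$ with $\epsilon=10^{-6}$ is five orders of magnitude too small to flip the sign, so $f$ is strictly concave near the vertex $(0,1)$. Your heuristic that ``$\chi-\alpha\sigma$ is affine'' is correct but does not force $(\sigma/\chi)''\ge 0$; the sign also depends on $\chi'(\sigma'\chi-\sigma\chi')$, which can dominate with the wrong sign, as here.

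The paper avoids convexity altogether. It constructs an explicit one-hot replacement for row $k$: put the $1$ in the column $c^\ast=\arg\min_d\sum_{i\ne k}P'_{id}$. It then shows directly that (i) the squared-entries numerator strictly increases, because a non-one-hot probability vector has $\sum_c v_c^2<(\sum_c v_c)^2=1$, and (ii) the cross-term $\sum_{i\ne j}\sum_c P_{ic}P_{jc}$ weakly decreases, because $\sum_c P^{*}_{kc}\!\sum_{i\ne k}P'_{ic}=\min_d\sum_{i\ne k}P'_{id}\le \sum_c P'_{kc}\sum_{i\ne k}P'_{ic}$. Plugging (i)--(ii) into the form $NS=\big(\tfrac{T}{S}+\alpha\big)^{-1}+\epsilon S$ gives a strict increase of $NS$ (the $\epsilon$ term is needed only to get strictness in the edge case $T=0$, which can occur when $B\le C$). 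This explicit-replacement argument is what you should use; the convexity route is not salvageable for general $\alpha\ge 0$.
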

\begin{proof}
In the following, we will prove {\bf Theorem \ref{theo:1}} using a proof by contradiction.

Suppose $P'$ is an optimal solution to NSM, and its $k$-th row ${P'_k}$ is not one-hot vector. We construct a new matrix ${P^{\rm{*}}}$ by replacing ${P'_k}$  in ${P'}$  with a new vector $P_k^*$ whose element is defined as:
\begin{equation}
P_{kc}^* = \left\{ \begin{array}{l}
1,\;if\;c = \arg \mathop {\min }\limits_d \sum_{i = 1,i \ne k}^B {{{P'}_{id}}} \\
0,\;otherwise
\end{array} \right..
\end{equation}

Next, we prove that $NS\left( {{P^*}} \right) > NS\left( {P'} \right)$ .

First, according to the supposition that ${P'_k}$ is not one-hot vector, there is,
\begin{equation}
\begin{array}{l}
\sum_{c = 1}^C {{{( {P_{kc}^*} )}^2}}  = {( {\sum_{c = 1}^C {{{P'}_{kc}}} })^2} > \sum_{c = 1}^C {{{( {{{P'}_{kc}}} )}^2}}
\end{array},
\end{equation}
where the inequality holds because
\begin{equation}
\begin{array}{l}
{( {\sum_{c = 1}^C {{{P'}_{kc}}} })^2}
 = \sum_{c = 1}^C {{{( {{{P'}_{kc}}})}^2}}  + \sum_{i,j = 1,i \ne j}^C {{{P'}_{ki}}{{P'}_{kj}}}
 \end{array},
\end{equation}
wherein $\sum_{i ,j = 1,i \ne j}^C {{{P'}_{ki}}{{P'}_{kj}}}  > 0$.
Thus, there holds,
\begin{equation}\label{equ:squares}
\begin{array}{l}
\;\;\;\;\sum\limits_{i = 1}^B {\sum\limits_{c = 1}^C {{{\left( {P_{ic}^*} \right)}^2}} }  \\= \sum\limits_{c = 1}^C {{{( {P_{kc}^*} )}^2}}  + \sum\limits_{i = 1,i \ne k}^B {\sum\limits_{c = 1}^C {{{( {{{P'}_{ic}}} )}^2}} } \\
 > \sum\limits_{c = 1}^C {{{( {{{P'}_{kc}}} )}^2} + \sum\limits_{i = 1,i \ne k}^B {\sum\limits_{c = 1}^C {{{( {{{P'}_{ic}}} )}^2}} } } \\
= \sum\limits_{i = 1}^B {\sum\limits_{c = 1}^C {{{( {{{P'}_{ic}}} )}^2}} }
\end{array}.
\end{equation}

Second, there is,
\begin{equation}
\begin{array}{l}
\;\;\;\;\sum\limits_{c = 1}^C { {P_{kc}^*\sum\limits_{i = 1,i \ne k}^B {{{P'}_{ic}}} } } \\ = \sum\limits_{i = 1,i \ne k}^B {{{P'}_{i{c^*}}}} \\
 = \sum\limits_{c = 1}^C { {{{P'}_{kc}}\sum\limits_{i = 1,i \ne k}^B {{{P'}_{i{c^*}}}} }}  \\
\le \sum\limits_{c = 1}^C { {{{P'}_{kc}}\sum\limits_{i = 1,i \ne k}^B {{{P'}_{ic}}} } ,}
 \end{array}
\end{equation}
where ${c^*} = \arg \mathop {\min }\limits_d \sum_{i = 1,i \ne k}^B {{{P'}_{id}}} $.
Thus, there holds,
\begin{equation}\label{equ:pair_product}
\begin{array}{l}
\;\;\;\;\sum\limits_{i,j = 1,i \ne j}^B {\sum\limits_{c = 1}^C {P_{ic}^*P_{jc}^*} } \\
 = 2\sum\limits_{c = 1}^C { {P_{kc}^*\sum\limits_{i = 1,i \ne k}^B {{{P'}_{ic}}} }}  + \sum\limits_{i,j = 1,i \ne j\ne k}^B {\sum\limits_{c = 1}^C {{{P'}_{ic}}{{P'}_{jc}}} } \\
 \le 2\sum\limits_{c = 1}^C { {{{P'}_{kc}}\sum\limits_{i = 1,i \ne k}^B {{{P'}_{ic}}} }}  + \sum\limits_{i,j = 1,i \ne j \ne k}^B {\sum\limits_{c = 1}^C {{{P'}_{ic}}{{P'}_{jc}}} } \\
 = \sum\limits_{i,j = 1,i \ne j}^B {\sum\limits_{c = 1}^C {{{P'}_{ic}}{{P'}_{jc}}} }
\end{array}.
\end{equation}

Now, we rewrite \eqref{equ:ns_essence} as:
\begin{equation}\label{equ:ns_ratio}
\textit{NS}\left( P \right) = \frac{1}{{\frac{{\sum\limits_{i,j = 1,i \ne j}^B {\sum\limits_{c = 1}^C {{P_{ic}}{P_{jc}}} } }}{{\sum\limits_{i = 1}^B {\sum\limits_{c = 1}^C {{{( {{P_{ic}}})}^2}} } }} + \alpha } }  + \epsilon{\sum\limits_{i = 1}^B {\sum\limits_{c = 1}^C {{{( {{P_{ic}}})}^2}} } },
\end{equation}

Obviously, there is,
\begin{equation}\label{equ:pair_product_range}
\left\{ \begin{array}{l}
\sum_{i,j = 1,i \ne j}^B {\sum_{c = 1}^C {{P_{ic}}{P_{jc}}} }  > 0\;if\;B > C\\
\sum_{i,j = 1,i \ne j}^B {\sum_{c = 1}^C {{P_{ic}}{P_{jc}}} }  \ge 0\;if\;B \le C
\end{array} \right..
\end{equation}

Recall the assignment of $\epsilon$
\begin{equation} \label{equ:eps}
\left\{ \begin{array}{l}
\epsilon  \ge 0\;if\;B > C\\
\epsilon  > 0\;if\;B \le C
\end{array} \right..
\end{equation}
By combining \eqref{equ:squares}, \eqref{equ:pair_product}, \eqref{equ:pair_product_range},
\eqref{equ:eps}, and
\eqref{equ:ns_ratio}, it is easy to obtain that $NS\left( {{P^*}} \right) > NS\left( {P'} \right)$. This contradicts the supposition that $P'$ is the optimal solution.
\end{proof}

\begin{theorem}
\label{theo:2}
(\textbf{The equity}) When $r=1$, the optimal solution to NSM has balanced class sizes.
\end{theorem}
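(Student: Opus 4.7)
The plan is to lean on Theorem \ref{theo:1} to reduce the problem to a simple discrete optimization over class counts, then finish with a convexity/swap argument.

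First, since Theorem \ref{theo:1} already established that with $r=1$ any optimal $P^*$ must have one-hot rows, I would restrict the search space to matrices whose rows are one-hot. For such a matrix, writing $n_c=\sum_{i=1}^B P_{ic}$, one immediately has $\sum_{i=1}^B\sum_{c=1}^C P_{ic}^2 = \sum_{c=1}^C n_c = B$ (each entry is $0$ or $1$, so squaring does nothing), while $\sum_{c=1}^C\bigl(\sum_{i=1}^B P_{ic}\bigr)^2 = \sum_{c=1}^C n_c^2$. Plugging these into the rewritten form \eqref{equ:ns_essence} of $\textit{NS}(P)$ collapses it to a function of the class-count vector $(n_1,\dots,n_c)$ only:
\begin{equation*}
\textit{NS}(P) \;=\; \frac{1}{\frac{1}{B}\sum_{c=1}^C n_c^2 + \alpha - 1} \;+\; \epsilon B .
\end{equation*}

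Next I would argue that the denominator is strictly positive on the feasible set, so maximizing $\textit{NS}(P)$ is equivalent to minimizing $\sum_{c=1}^C n_c^2$. For this I would note that each $n_c$ is a non-negative integer with $\sum_c n_c = B$, hence $n_c^2 \geq n_c$ and consequently $\sum_c n_c^2 \geq B$, so $\tfrac{1}{B}\sum_c n_c^2 + \alpha - 1 \geq \alpha \geq 0$; if $\alpha=0$ one notices that $\sum_c n_c^2 > B$ whenever any $n_c \geq 2$, and the degenerate case $n_c\in\{0,1\}$ only occurs when $B\le C$, which is precisely the branch where $\epsilon>0$ by \eqref{equ:eps} and the original fractional form remains well defined by the same care taken in Theorem \ref{theo:1}.

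The remaining step is the discrete optimization
\begin{equation*}
\min_{n_c\in\mathbb{N}_0}\;\sum_{c=1}^C n_c^2 \quad \text{s.t.}\quad \sum_{c=1}^C n_c = B .
\end{equation*}
This is a standard convexity/rearrangement fact: if two counts differ by at least $2$, say $n_p \ge n_q+2$, then replacing $(n_p,n_q)$ by $(n_p-1,n_q+1)$ strictly decreases $\sum_c n_c^2$ (the change is $-2(n_p-n_q-1) < 0$) while preserving the sum. Iterating the swap drives all pairwise differences down to at most $1$, so at the optimum $n_c^* \in \{\lfloor B/C\rfloor,\lceil B/C\rceil\}$, with the exact count of $\lfloor B/C\rfloor$'s determined by \eqref{equ:compute_m}, exactly the definition of balanced class sizes.

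I do not foresee a serious obstacle; the one place to be careful is ensuring that the substitution into \eqref{equ:ns_essence} is valid and the denominator stays positive across the regimes of $\alpha$ and $B$ vs.\ $C$ covered by the paper's conventions on $\epsilon$. The swap argument itself is routine and avoids any calculus.
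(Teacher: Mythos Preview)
Your proposal is correct and follows essentially the same route as the paper: invoke Theorem \ref{theo:1} to restrict to one-hot rows, collapse \eqref{equ:ns_essence} to a function of the class counts $n_c$, and reduce NSM to minimizing $\sum_c n_c^2$ under $\sum_c n_c=B$, yielding $n_c^*\in\{\lfloor B/C\rfloor,\lceil B/C\rceil\}$ via \eqref{equ:compute_m}. The only differences are cosmetic: you supply an explicit swap argument for the discrete minimization and a positivity check on the denominator, both of which the paper simply asserts.
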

\begin{proof}
Based on {\bf Theorem \ref{theo:1}}, we search the optimal solution from the space within which every matrix consists of one-hot rows. Accordingly, \eqref{equ:ns_essence} can be rewritten as:
\begin{equation}
\textit{NS}(P) = \frac{B}{{\sum_{c = 1}^C {n_c^2}  + \left( {\alpha  - 1} \right)B}} + \epsilon B,
\end{equation}
where ${n_c} = \sum_{i = 1}^B {{P_{ic}}} $. Then, the NSM problem can be cast as:
\begin{equation}
\textstyle
\label{equ:class_size_balance}
\mathop {\min }\limits_{{n_c} \in \mathbb{N}_0} \sum_{c = 1}^C {n_c^2} \;\;\;s.t.\sum_{c = 1}^C {{n_c}}  = B,
\end{equation}
where $\mathbb{N}_0$ stands for non-negative integers. The optimal solution $n_c^*$ to \eqref{equ:class_size_balance} can be obtained by $n_c^* \in \{ {\lfloor {B/C} \rfloor ,\lceil {B/C} \rceil } \}$, and the number of $\left\lfloor {B/C} \right\rfloor $ denoted by $m$ is given by solving \eqref{equ:compute_m}.
This indicates that the optimal solution has balanced class sizes.
\end{proof}

\begin{theorem}
\label{theo:3}
(\textbf{The generic equity}) When $B\le C$ and $0 < r < 1$, a sufficient and necessary condition for the optimal solution ${P^*}$ to NSM is that each row of ${P^*}$ is one-hot, and for each column of ${P^*}$, there is at most one entry is 1.
\end{theorem}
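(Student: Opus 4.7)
My plan is to derive a sharp upper bound on $\textit{NS}(P)$ over the feasible set and then read off both directions of the theorem from the equality conditions, rather than running a contradiction argument row-by-row as in Theorems~\ref{theo:1} and~\ref{theo:CWSM}. Writing $A = \sum_{i=1}^B\sum_{c=1}^C P_{ic}^2$ and $D = \sum_{i\ne j}\bigl(\sum_c P_{ic}P_{jc}\bigr)^r$, the loss takes the compact form $\textit{NS}(P) = A/(D + \alpha A) + \epsilon A$. Two elementary facts drive the argument. First, since $P_{ic}\in[0,1]$ forces $P_{ic}^2 \le P_{ic}$, summing yields $A \le B$, with equality iff every row of $P$ is one-hot. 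Second, every summand of $D$ is nonnegative, and for $0<r<1$ the condition $D=0$ is equivalent to $\sum_c P_{ic}P_{jc}=0$ for all $i\ne j$. Combining these with monotonicity of $x\mapsto A/(x+\alpha A)$ in $x$ gives the chain $\textit{NS}(P) \le 1/\alpha + \epsilon A \le 1/\alpha + \epsilon B$, valid for every feasible $P$.

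Next I translate the equality conditions. Saturating the bound requires simultaneously $A = B$ and $D = 0$. The first forces every row of $P$ to be one-hot. Given one-hot rows, $\sum_c P_{ic}P_{jc}$ is exactly the indicator that rows $i$ and $j$ land in the same class, so $D=0$ is equivalent to requiring distinct rows to lie in distinct classes, i.e., each column of $P$ contains at most one entry equal to $1$ — precisely the characterization in the theorem. The hypothesis $B\le C$ guarantees that such an injective row-to-class assignment exists, so the upper bound is in fact attained, and any $P^*$ meeting the two conditions yields the common optimal value $1/\alpha + \epsilon B$. Sufficiency and necessity then fall out of the same inequality chain.

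The main obstacle I anticipate is pinning down that both inequalities are strict when their respective conditions fail, and in particular ensuring $\epsilon > 0$ so that the second inequality separates $A<B$ from $A=B$. Without $\epsilon>0$, non-one-hot configurations with $D=0$ (e.g., rows with disjoint fractional supports, possible when $B\le C$) also attain $1/\alpha$, and the necessary direction collapses. I will invoke the convention from equation~\eqref{equ:eps} used in the proof of Theorem~\ref{theo:1}, which fixes $\epsilon>0$ precisely in the regime $B\le C$ at hand. With that in place, case-checking the three failing alternatives $(A=B, D>0)$, $(A<B, D=0)$, and $(A<B, D>0)$ to confirm strict loss in $\textit{NS}$ is immediate from the two facts above, so no row-swap construction is needed.
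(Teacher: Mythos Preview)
Your argument is correct and follows essentially the same route as the paper: establish the uniform bound $\textit{NS}(P)\le 1/\alpha+\epsilon B$ and then characterize optimality via the equality conditions. If anything, you are more thorough than the paper's terse version, since you make explicit why $\epsilon>0$ (from \eqref{equ:eps} in the regime $B\le C$) is needed to rule out non-one-hot rows with pairwise disjoint supports in the necessity direction---a point the paper's one-line contradiction leaves implicit.
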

\begin{proof}
We first prove the sufficiency. Obviously, there is
\begin{equation}
{\textstyle
\textit{NS}(P) = \frac{1}{{\frac{{\sum\limits_{i,j = 1,i \ne j}^B {({\sum\limits_{c = 1}^C {{P_{ic}}{P_{jc}}} })^r } }}{{\sum\limits_{i = 1}^B {\sum\limits_{c = 1}^C {{{\left( {{P_{ic}}} \right)}^2}} } }} + \alpha }} + \epsilon \sum\limits_{i = 1}^B {\sum\limits_{c = 1}^C {{{\left( {{P_{ic}}} \right)}^2}} } \le \frac{1}{\alpha } + \epsilon B
}.
\end{equation}

It can be verified that $\textit{NS}( {{P^*}}) = \frac{1}{\alpha } + \epsilon B$. Thus, $P^*$ is an optimal solution.
Next we prove the necessity using proof by contradiction.
Assuming an optimal solution $P^*$ does not meet the condition, there is  $\textit{NS}({{P^*}}) < \frac{1}{\alpha } + \epsilon B$, which contradicts the definition of optimal solution.
\end{proof}

\subsection{Case Study}

\begin{figure}
	\centering
	\includegraphics[scale=0.4]{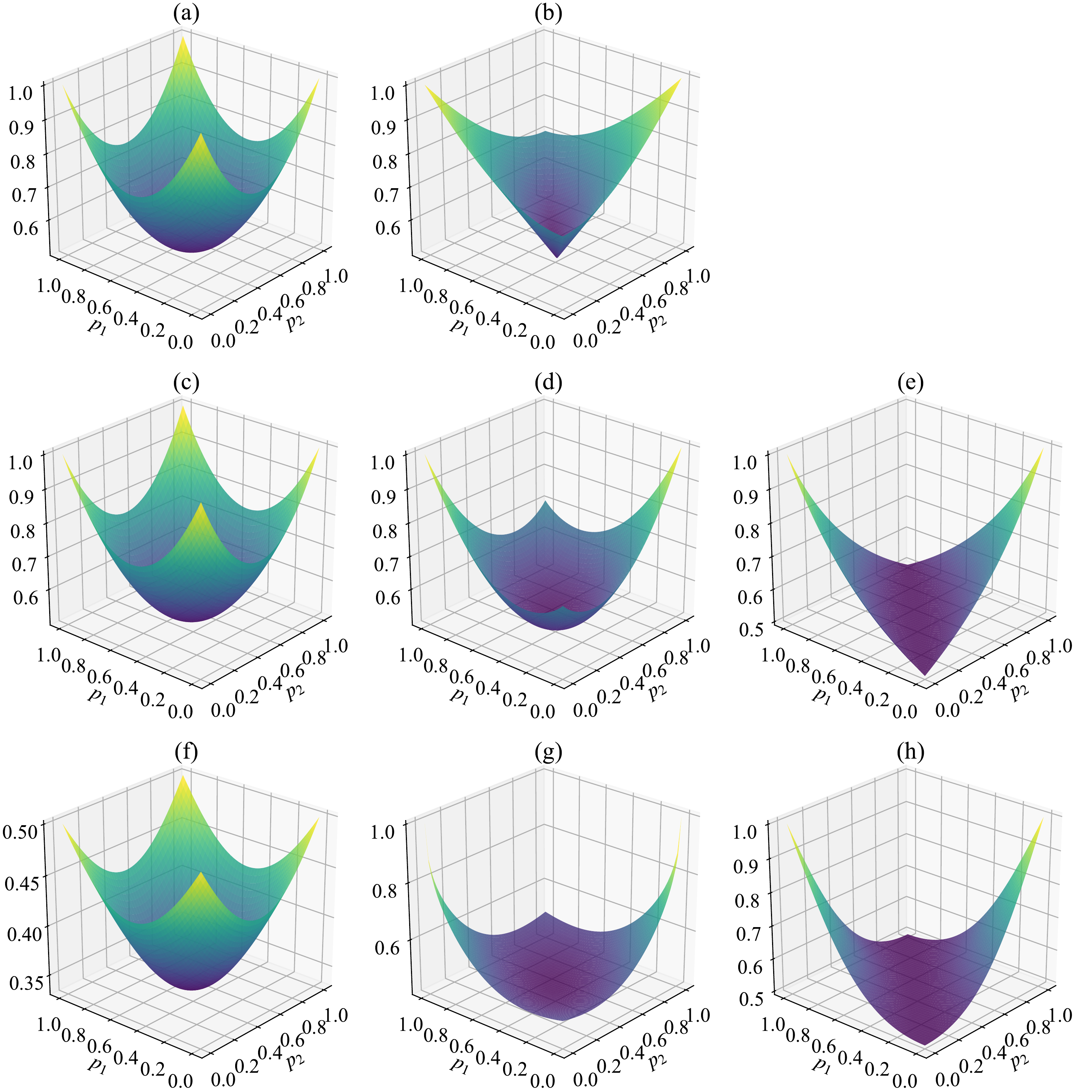}
	\caption{Visualization of the negative loss functions of (a) MS, (b) BNM, (c) CWSM ($r=0$), (d) CWSM ($r=0.5$), (e) CWSM ($r=1$), (f) NSM ($r=0$), (g) NSM ($r=0.5$), and (h) NSM ($r=1$) in the case of $B=2$ and $C=2$.}
    \label{fig:obj}
\end{figure}

\label{sec:case_study}
For better understanding, in this section, we will study the proposed two loss functions, CWSM and NSM, compared with MS and BNM in a special case, $B=2$ and $C=2$. In this case, $P$ has the the following form,
\begin{equation}
P = \left[ {\begin{array}{*{20}{c}}
{{p_1}}&{1 - {p_1}}\\
{{p_2}}&{1 - {p_2}}
\end{array}} \right],
\end{equation}
where $0\le p_1 \le 1$ and $0\le p_2 \le 1$. For simplicity, denote the four extreme points of the feasible region as follows,
\begin{equation}
\begin{array}{l}
{P_1} = \left[ {\begin{array}{*{20}{c}}
0&1\\
0&1
\end{array}} \right],{P_2} = \left[ {\begin{array}{*{20}{c}}
1&0\\
0&1
\end{array}} \right],\vspace{0.2cm}\\
{P_3} = \left[ {\begin{array}{*{20}{c}}
0&1\\
1&0
\end{array}} \right],{P_4} = \left[ {\begin{array}{*{20}{c}}
1&0\\
1&0
\end{array}} \right].
\end{array}
\end{equation}

Fig. \ref{fig:obj} shows the surfaces of different loss functions. Note that a negative sign is added to all losses for better demonstration and thus the optimization problem becomes maximizing the objective. From Fig. \ref{fig:obj}, three observations can be made:

1) It can be seen that $P_1$, $P_2$, $P_3$, and $P_4$ are optimal solutions to MS, while $P_2$ and $P_3$ are optimal solutions to BNM, CWSM ($r>0$), and NSM ($r>0$).
Since the two classes of $P_2$ and $P_3$ are more balanced that those of $P_1$ and $P_4$, BNM, CWSM ($r>0$), and NSM ($r>0$) favor to class balanced predictions.

2) The objective surfaces of CWSM ($r=0$) and NSM ($r=0$) are identical and similar to that of MS, respectively. This verifies that there is no equity constraint for CWSM and NSM when $r=0$.  Fig. \ref{fig:obj} (c)-(e) and (f)-(h) illustrate the effect of $r$ on the proposed two loss functions. For both CWSM and NSM, larger $r$ leads to more expectation on balanced solutions.


3) By comparing BNM, CWSM ($r=0.5$), and NSM ($r=0.5$), the BNM objective values and gradients of solutions which are less discriminate (near $p_1=p_2=0.5$) are relatively large but those values are relatively small for NSM ($r=0.5$).
The uncertain solutions are more likely to be outliers and noises. Thus, NSM ($r=0.5$) can be considered robust to outliers and noises.

\subsection{Computation Complexity}

\begin{table}
\label{tab:computation_complexity}
\centering
\caption{Computation complexities of compared methods.} 
\label{tab:coomplexity}
\begin{tabular}{cc}
\hline
Method&Complexity\\
\hline
MaxSquare&$O(BC)$\\
{BNM}&$\min\{O(B^2C), O(BC^2)\}$\\
{CWSM}&$O(BC)$\\
{NSM ($r=1$)}&$O(BC)$\\
{NSM ($r\ne 1$)}&$O(B^2C)$\\
\hline
\end{tabular}
\end{table}

Based on \eqref{equ:cws}, \eqref{equ:ns_essence}, and \eqref{equ:ns}, we list the computation complexities of CWSM, NSM and two compared methods including MaxSquare and BNM in Table \ref{tab:coomplexity}. It can be seen that the computation complexity of CWSM is lower than those of both BNM and NSM. Since the mini-batch size $B$ is usually small in the cross-domain image classification task, the computation complexity of NSM is comparable to that of BNM.

\subsection{Connections between BNM, CWSM, and NSM}
\begin{table}
\centering
\caption{CWS and NS values of three example prediction matrices. (Table \ref{tab:diversity_example} continued) \label{tab:loss_example}}
\begin{tabular}{cc:cc}
\hline
Prediction&Nuclear norm&CWS value&NS value\\
\hline
$P_1$&2&1&0.25\\
$P_2$&2.73&1.37&0.4\\
$P_3$&2.82&1.41&0.5\\
\hline
\end{tabular}
\end{table}
As mentioned previously, CWSM and NSM are designed for maximizing the discriminability and equity, and BNM actually does the same thing. We append the CWS and NS values to Table \ref{tab:diversity_example} and summarize them in Table \ref{tab:loss_example}. From Table \ref{tab:loss_example}, for CWS and NS, there also is $l(P_1)<l(P_2)<l(P_3)$, where $l(\cdot)$ represents the corresponding loss function. This gives an intuitive example that CWSM and NSM encourage the equity. Interestingly, CWS has a close connection with nuclear norm on the extreme points: assume that the rows of matrix $P\in \mathbb R^{B\times C}$ are one-hot, there is,
\begin{equation}
\|P\|_*=C\cdot CWS(P)|_{r=0.5},
\end{equation}
which can be easily obtained according to \eqref{equ:nuclear_norm_nc} and \eqref{eq:cws_one_hot}.

In summary, BNM, CWSM, and NSM can be considered to reach a same goal in different ways. Despite this, it needs to be noted that CWSM is more computationally efficient than BNM, and CWSM and NSM have a parameter $r$ which is able to dynamically control discriminability and equity. Besides, we uncover the working mechanism of BNM and succeed in designing two new loss functions according to the working mechanism: the discriminability and equity maximization. The efficacy of CWSM and NSM verify this point in next section.

\begin{figure*}
	\centering
	\begin{adjustwidth}{-3.1cm}{}
	\includegraphics[scale=0.58]{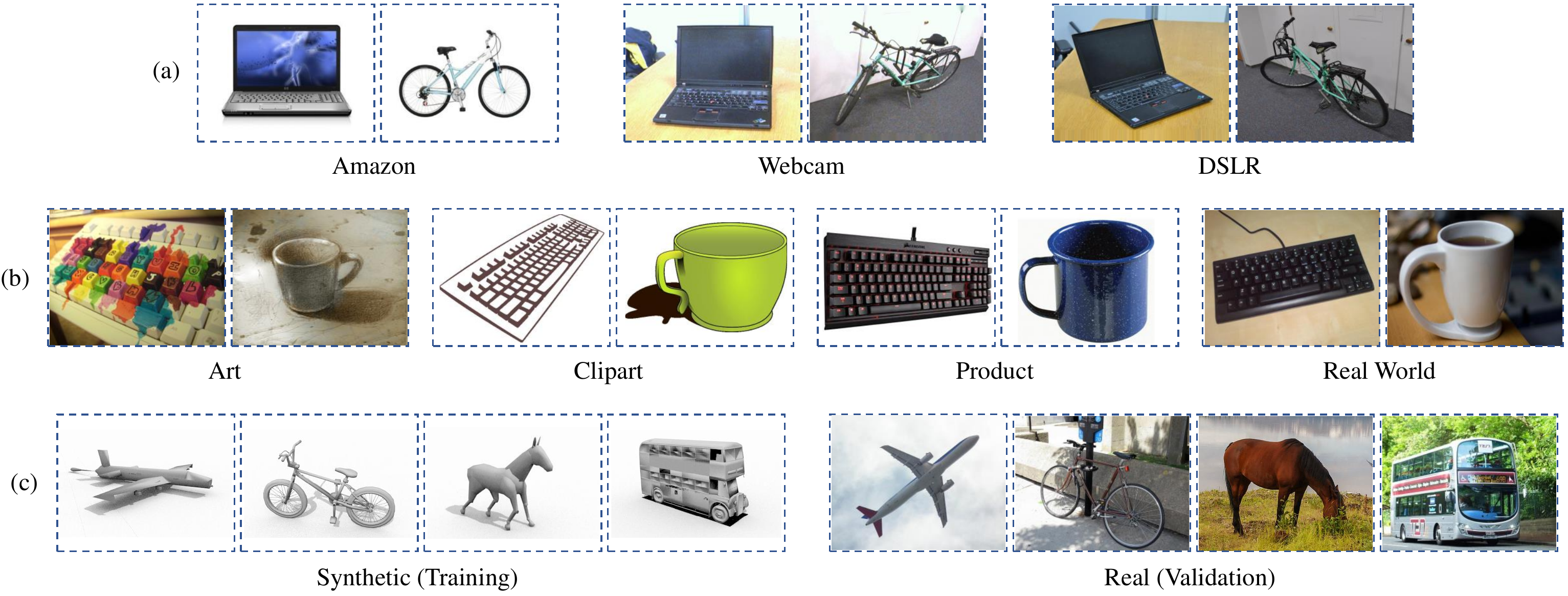}
	\caption{Sample images from (a) office-31, (b) Office-Home, and (c) VisDA-2017 datasets.}
	\label{fig:dataset}	
	\end{adjustwidth}

\end{figure*}

\section{Experiments}
\label{sec:exp}

\begin{table}
\centering
\caption{Accuracy (\%) of compared methods on the Office-31 dataset based on ResNet-50.} 

\label{tab:office}
\begin{tabular}{cccccccc} \hline
Method&{A$\to$W}&{D$\to$W}&{W$\to$D}&{A$\to$D}&{D$\to$A}&{W$\to$A}&{Avg}
\\ \hline
ResNet-50 \cite{he2016deep}                      &68.4&96.7&99.3&68.9&62.5&60.7&76.1\\
DAN \cite{long2015learning}                       &80.5&97.1&99.6&78.6&63.6&62.8&80.4\\
DANN \cite{ganin2016domain}                   &82.0&96.9&99.1&79.7&68.2&67.4&82.2\\
JAN \cite{long2017deep}                             &85.4&97.4&\underline{99.8}&84.7&68.6&70.0&84.3\\
CDAN \cite{long2018conditional}               &94.1&98.6&\bf100.0&92.9&71.0&69.3&87.7\\
SAFN \cite{xu2019larger}                            &90.1&98.6&\underline{99.8}&90.7&73.0&70.2&87.1\\
SymNets  \cite{zhang2019domain}&90.8&98.8&\bf 100.0&\bf93.9&74.6&72.5&88.4\\
MaxSquare \cite{chen2019domain}            &92.4&\bf99.1&\bf100.0&90.0&68.1&64.2&85.6\\
BNM* \cite{cui2020towards}                        &\underline{95.2}&98.6&99.6&92.3&\underline{75.1}&75.4&89.4\\
\hline
CWSM &94.5 &\underline{99.0} &99.7 &92.6  &\bf75.6 &\underline{75.9}&\underline{89.6}\\
NSM &\bf95.4&98.8&99.5&\underline{93.8}&\bf75.6&\bf76.3&\bf89.9\\
\hline
\end{tabular}

\end{table}

\begin{table*}
\centering
\setlength{\tabcolsep}{0.9mm}
\caption{Accuracy (\%) of compared methods on the Office-Home dataset based on ResNet-50.} 
 \begin{adjustwidth}{-3.9cm}{}
\label{tab:office_home}
{
\begin{tabular}{cccccccccccccc}
\hline
Method          & Ar$\to$Cl & Ar$\to$Pr & Ar$\to$Rw & Cl$\to$Ar & Cl$\to$Pr & Cl$\to$Rw & Pr$\to$Ar & Pr$\to$Cl & Pr$\to$Rw & Rw$\to$Ar & Rw$\to$Cl & Rw$\to$Pr & Avg  \\
\hline
ResNet-50 \cite{he2016deep} & 34.9  & 50.0  & 58.0  & 37.4  & 41.9  & 46.2  & 38.5  & 31.2  & 60.4  & 53.9  & 41.2  & 59.9  & 46.1 \\
DAN \cite{long2015learning} & 43.6  & 57.0  & 67.9  & 45.8  & 56.5  & 60.4  & 44.0  & 43.6  & 67.7  & 63.1  & 51.5  & 74.3  & 56.3 \\
DANN \cite{ganin2016domain}  &45.6& 59.3 &70.1 &47.0 &58.5 &60.9 &46.1 &43.7 &68.5 &63.2 &51.8& 76.8 &57.6 \\
JAN \cite{long2017deep}&45.9 &61.2 &68.9& 50.4 &59.7& 61.0& 45.8& 43.4 &70.3 &63.9 &52.4& 76.8& 58.3\\
CDAN \cite{long2018conditional} & 50.7  & 70.6  & 76.0  & 57.6  & 70.0  & 70.0  & 57.4  & 50.9  & 77.3  & 70.9  & 56.7  & 81.6 & 65.8  \\
SAFN \cite{xu2019larger} & 52.0  & 71.7  & 76.3  & 64.2  & 69.9  & 71.9  & \underline{63.7}  & 51.4  & 77.1  & 70.9  & \underline{57.1}  & 81.5 & 67.3  \\
SymNets \cite{zhang2019domain} & 47.7  & 72.9  & 78.5  & 64.2  & 71.3  & 74.2  &\bf 64.2  & 48.8  & 79.5  & \bf74.5  & 52.6  & 82.7 & 67.6  \\
MaxSquare* \cite{chen2019domain} & 50.3  & 70.9  & 76.5  & 60.5  & 68.3  & 69.8  & 59.0  & 47.0  & 76.5  & 70.4  & 53.0  & 80.9  & 65.3 \\
BNM* \cite{cui2020towards}       & 54.3  & 74.9  & 79.3  & 64.2  & \underline{74.6}  & 74.6  & 61.8  & 52.7  & 80.2  & 71.5  & 56.4 & 83.0  & 69.0 \\
\hline
CWSM &\underline{55.8} &\underline{75.9} &\underline{79.8}  &\underline{65.6}  &\underline{74.6} &\underline{75.6} &63.3 &\underline{53.8} &\underline{80.7} &72.9 &\underline{57.1} &\underline{83.4} &\underline{69.9}\\
NSM      & \bf56.8  & \bf76.5  & \bf80.3  & \bf66.6  &\bf 75.6  &\bf 75.9  & {63.6}  & \bf53.9  & \bf81.0  & \underline{73.3}  & \bf57.9  & \bf83.8  &\bf 70.4\\
\hline
\end{tabular}
}
\end{adjustwidth}
\end{table*}

\begin{table*} 
\centering
\caption{Accuracy (\%) of compared methods on the VisDA-2017 dataset based on ResNet-101.} 
\label{tab:visda}
\begin{adjustwidth}{-3.4cm}{}
\begin{tabular}{cccccccccccccc}
\hline
Method          & plane & bcycl & bus & car & horse & knife & mcycl & person & plant & sktbrd & train & truck & Avg  \\
\hline
ResNet-101 \cite{he2016deep} & 55.1  & 53.3  & 61.9  & 59.1  & 80.6  & 17.9  & 79.7  & 31.2  & 81.0  & 26.5  & 73.5  & 8.5  & 52.4 \\
DANN \cite{ganin2016domain}  &81.9& 77.7 &82.8 &44.3 &81.2 &29.5 &65.1 &28.6 &51.9 &54.6 &82.8& 7.8 &57.4 \\
DAN \cite{long2015learning} & 87.1  & 63.0  & 76.5  & 42.0  & 90.3  & 42.9  & 85.9  & 53.1  & 49.7  & 36.3  & 85.8  & 20.7  & 61.1 \\
MaxSquare* \cite{chen2019domain} & 92.4  & 42.4  & 81.2  & 74.8  & 89.2  & \bf89.7  & \bf92.4  & 55.8  & 80.5  & 46.1  & 86.6  & 15.4  & 70.5 \\
CDAN \cite{long2018conditional} & 85.2  & 66.9  & 83.0  & 50.8  & 84.2  & 74.9  & 88.1  & 74.5  & 83.4  & 76.0  & 81.9  & 38.0 & 73.9  \\
SAFN \cite{xu2019larger} & 93.6  & 61.3  & \bf84.1  & \bf70.6  & \bf94.1  & 79.0  & \underline{91.8}  & \bf79.6  & \underline{89.9}  & 55.6  & {89.0}  & 24.4 & 76.1  \\
BNM* \cite{cui2020towards}       &  \bf 94.0  & \bf82.7  & 77.2  & 65.2  & {93.1}  & 76.2  & 85.9  & 77.4  & 88.5  & 68.2  & \underline{89.3} & \bf47.9  & 78.8 \\
\hline
CWSM &93.2 &\underline{81.9} &\underline{83.2}  &\underline{69.9}  &{92.9} &{80.4} &86.8 &{76.8} &{89.7} &\bf79.5 &{88.3} &\underline{41.2} &\underline{80.3}\\
NSM      & \underline{93.8}  & 81.8  & 83.0  & 66.0  & \underline{93.2}  & \underline{81.8}  & {89.0}  & \underline{78.5}  & \bf92.1  & \underline{77.1}  & \bf89.7  & 38.3  & \bf80.4\\
\hline
\end{tabular}
\end{adjustwidth}
\end{table*}

\begin{table}
\centering
\caption{Equity and discriminability of target prediction on the Ar$\to$Cl task.}
\label{tab:diff}
\begin{tabular}{ccc}
\hline
Method&Equity&Discriminability\\
\hline
{CWSM ($r=0.5$)}&0.719&0.933\\
{CWSM ($r=1$)}&0.818&0.867\\
{NSM ($r=0.5$)}&0.774&0.945\\
{NSM ($r=1$)}&0.805&0.785\\
\hline
\end{tabular}
\end{table}

\begin{figure}
	\centering
	\includegraphics[scale=0.3]{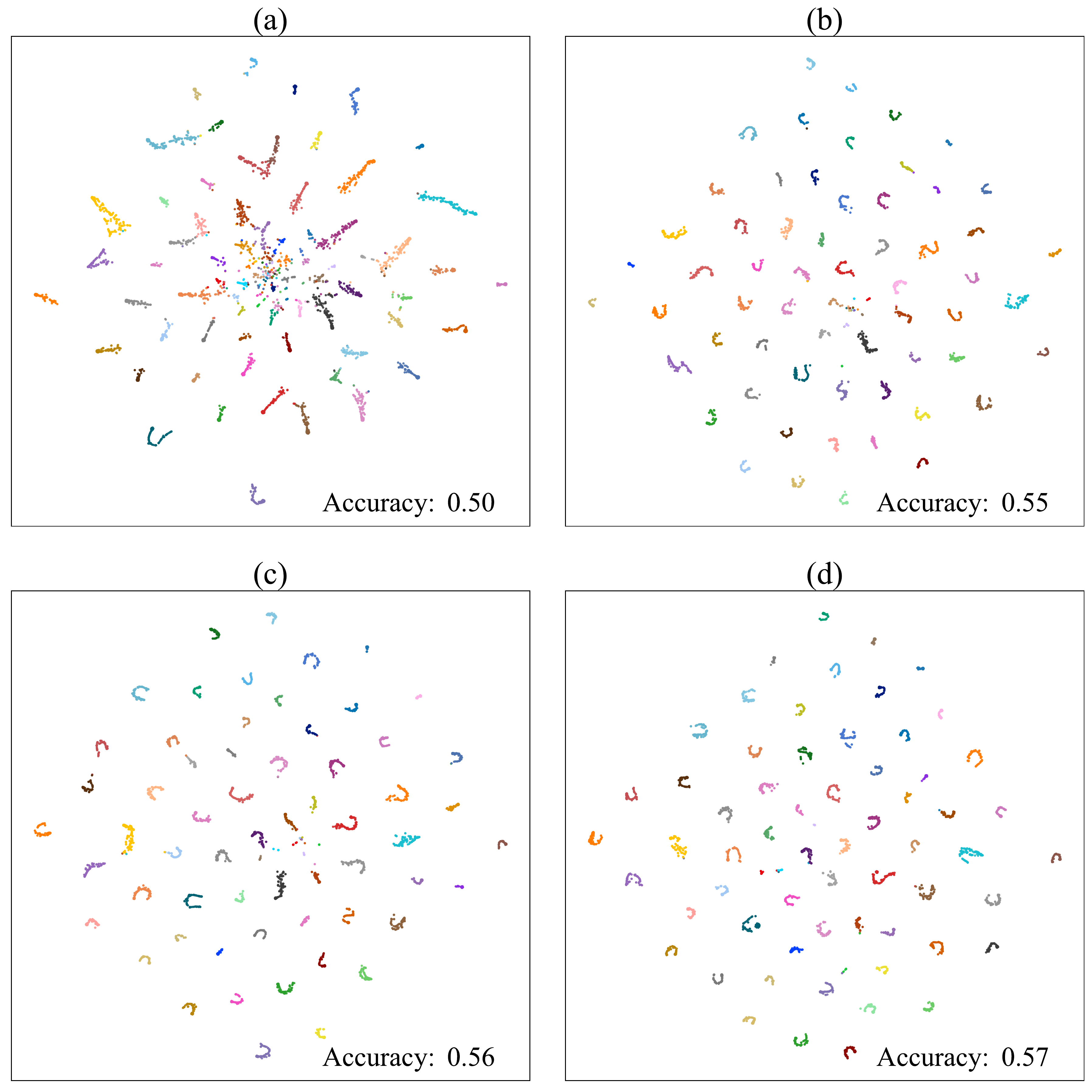}
	\caption{\emph{t}-SNE visualization of target probability prediction on task Ar$\to$Cl for (a) MaxSquare, (b) BNM, (c) CWSM, and (d) NSM. Sample points with different colors have different predicted class labels.}
	\label{fig:tsne}
\end{figure}

This section evaluates the performance of the proposed class weighted squares maximization (CWSM) and normalized squares maximization (NSM) with several baseline and state-of-the-art UDA methods on three standard benchmark datasets Office-31, Office-Home, and VisDA-2017.

\subsection{Setup}
\textbf{Office-31} \cite{saenko2010adapting} is a popular UDA benchmark dataset  which contains 4,110 images of 31 classes from three domains: A (Amazon) which contains images downloaded from the Amazon website, W (Webcam) which consists of  noisy images with low resolution taken by webcam, and D (DSLR) which contains images with high resolution captured by digital SLR camera in realworld environment. We construct 6 transfer tasks by pairwise combination of the three domains: A$\to$W, D$\to$W, W$\to$D, A$\to$D, D$\to$A, and W$\to$A. Several sample images from Office-31 are presented in Fig. \ref{fig:dataset} (a).

\textbf{Office-Home} \cite{venkateswara2017deep} is a more challenging benchmark dataset because of the larger domain gap. It contains 15,500 images of 65 classes from four domains: Ar (Art) which consists of artistic depictions, Cl (Clipart) which contains clipart images, Pr (Product) which contains images without backgrounds, and Rw (Real-World) which consists of real images taken by a camera. By pairwise combination of the four domains, there are in total 12 transfer tasks. Several sample images from Office-Home are presented in Fig. \ref{fig:dataset} (b).

\textbf{VisDA-2017}~\cite{peng2017visda} is a larger cross-domain image classification dataset in the transfer learning community, which comprises over 280K images from 12 classes. We use two domains including training domain (\textbf{Synthetic}) and validation domain (\textbf{Real}). We will evaluate 12 transfer tasks such as `train' and `truck' to evaluate our two UDA models. Several sample images from VisDA-2017 are illustrated in Fig.~\ref{fig:dataset} (c).

Following the widely used evaluation protocol, we adopt ResNet-50 as the base network and fine-tune the pre-trained model on ImageNet with new classification layer. We use the mini-batch stochastic gradient descent (SGD) with moment 0.9 to train the model. The mini-batch size $B$ is set to 36. All the experiments are implemented in PyTorch with Python programming language. For fair comparisons, we reproduce BNM \cite{cui2020towards} on both datasets in the same environment with our methods. Note that the average accuracies of BNM by our reproduction is higher than the reported results in the original paper due to our better hyper-parameter setting. We also reproduce MaxSquare \cite{chen2019domain} on the Office-Home dataset because the original study does not report results on this dataset. We denote our reproductions by \textit{method}* and  other results are duplicated from their original paper.
We fix the random seed in each run for reproducibility. For each method, we report the average accuracy of three runs with the fixed random seeds.

We set $\lambda=1$ for BNM as suggested by \cite{cui2020towards}, and empirically set $\lambda=\frac{1}{C}$ for MaxSquare. For our CWSM, we set $\lambda=1$ and $r=0.5$ for three datasets. For NSM, we set $\lambda=2, r=0.5, \alpha=1, \epsilon=0$ for the Office and VisDA-2017 datasets and $\lambda=1, r=0.5, \alpha=2, \epsilon=10^{-6}$ for the Office-Home dataset. The setting of $\epsilon$ in NSM is based on the fact that $B>C$ for Office and VisDA-2017 and $B<C$ for Office-Home. A larger $\alpha$ is adopted for Office-Home because the case of $B<C$ more likely leads to small value for the term $\sum_{i,j = 1,i \ne j}^B ({\sum_{c = 1}^C {{P_{ic}}{P_{jc}}} })^r$.

\subsection{Experimental Results}
From Table \ref{tab:office}, Table \ref{tab:office_home}, and Table \ref{tab:visda}, the average accuracies of both CWSM and NSM are significantly higher than that of MaxSquare. Since CWSM and NSM differ from MaxSquare only in the equity term, this result shows the efficacy of equity maximization on UDA. Besides, it can be observed that BNM, CWSM, and NSM achieve better performance than those compared methods. This shows the effectiveness of the proposed discriminability-and-equity maximization paradigm for UDA, and BNM is also in this framework. 
At last, CWSM slightly outperforms BNM, while NSM outperforms both CWSM and BNM. This results from the difference between three loss functions. According to the case study in Sec. \ref{sec:case_study}, we attribute the superior performance of NSM to its robustness to outliers or noises.

We use the \emph{t}-SNE to visualize 2D representations of MaxSquare, BNM, CWSM, and NSM in Fig. \ref{fig:tsne}. By comparing with MaxSquare, CWSM and NSM produce better cluster structure: within-class compactness, between-class separability, and balanced class spread. This benefits from the incorporated equity maximization terms. It is obvious that clear cluster structure is beneficial for classification tasks. Also, NSM exhibits slightly more compact cluster structure than BNM especially in the center region. This is consistent with that the accuracy of NSM is higher than that of BNM.

\subsection{Parameter Analysis}
\label{sec:par_ana}

\begin{table}
\centering
\setlength{\tabcolsep}{0.9mm}
\caption{Accuracy (\%) of CWSM and NSM on the Office-31 dataset based on ResNet-50 with different $r$.}
\label{tab:par_office}
\begin{tabular}{cccccccc} \hline
Method&{A$\to$W}&{D$\to$W}&{W$\to$D}&{A$\to$D}&{D$\to$A}&{W$\to$A}&{Avg}\\ \hline
CWSM ($r=0.5$) &94.5 &\bf{99.0} &\bf99.7 &{\bf92.6}  &\bf75.6 &\bf{75.9}&{\bf89.6}\\
CWSM ($r=1$) &{\bf94.8} &98.6 &{99.5} &92.0  &{75.2} &75.6&89.3\\ \hline

NSM ($r=0.5$)&\bf95.4&{\bf98.8}&{\bf99.5}&\bf93.8&\bf75.6&\bf76.3&\bf89.9\\
NSM ($r=1$) &93.3&97.7&99.2&88.2&74.9&75.1&88.1\\

\hline
\end{tabular}
\end{table}

\begin{table*}[!htb]
\centering
\setlength{\tabcolsep}{0.9mm}
\caption{Accuracy (\%) of CWSM and NSM under different $r$ on the Office-Home dataset based on ResNet-50.}
\label{tab:par_office_home}
\begin{adjustwidth}{-3.8cm}{}
\begin{tabular}{cccccccccccccc}
\hline
Method          & Ar$\to$Cl & Ar$\to$Pr & Ar$\to$Rw & Cl$\to$Ar & Cl$\to$Pr & Cl$\to$Rw & Pr$\to$Ar & Pr$\to$Cl & Pr$\to$Rw & Rw$\to$Ar & Rw$\to$Cl & Rw$\to$Pr & Avg  \\
\hline
CWSM ($r=0.5$) &\bf{55.8} &\bf{75.9} &\bf{79.8}  &\bf{65.6}  &{74.6} &\bf{75.6} &\bf{63.3} &\bf{53.8} &\bf{80.7} &\bf{72.9} &57.1 &\bf{83.4} &\bf{69.9}\\
CWSM ($r=1$) &54.7 &74.3 &79.0  &64.0  &\bf{74.7} &75.2 &62.6 &52.1 &79.6 &70.6 &\bf{57.4} &82.3 &68.9\\

\hline
NSM ($r=0.5$)      &\bf56.8  &\bf76.5  &\bf80.3  & \bf66.6  &\bf 75.6  &\bf75.9  & \bf63.6  &\bf 53.9  & \bf81.0  & \bf73.3  & \bf57.9  & \bf83.8  & \bf70.4\\
NSM ($r=1$)       & 53.9  & 72.5  & 77.9  & 61.6  & 72.3  & 73.4  & 61.3  & 51.4  & 77.8  & 68.9  & 56.1  & 81.2  & 67.4 \\
\hline
\end{tabular}
\end{adjustwidth}
\end{table*}

\begin{table*}[!htb]
\centering
\caption{Accuracy (\%) of CWSM and NSM under different $r$ on the VisDA-2017 dataset based on ResNet-101.}
\label{tab:par_visda}
\begin{adjustwidth}{-3.5cm}{}
\begin{tabular}{cccccccccccccc}
\hline
Method  & plane & bcycl & bus & car & horse & knife & mcycl & person & plant & sktbrd & train & truck & Avg  \\
\hline
CWSM ($r=0.5$) &\bf93.2 &\bf{81.9} &\bf{83.2}  &{69.9}  &\bf{92.9} &\bf{80.4} &86.8 &\bf{76.8} &\bf{89.7} &\bf79.5 &\bf{88.3} &{41.2} &\bf{80.3}\\
CWSM ($r=1$) &93.1 &80.6 &80.7 &\bf71.5 &92.4 &78.9 &\bf88.2 &75.6 &89.0 & 74.0 & 87.5& \bf43.8 & 79.6\\

\hline
NSM ($r=0.5$) & 93.8  & 81.8  & \bf83.0  & \bf66.0  & \bf93.2  & \bf81.8 & \bf89.0  & \bf78.5  & \bf92.1  & \bf77.1  & \bf89.7  & 38.3  &\bf 80.4\\
NSM ($r=1$)       & \bf94.7 &\bf83.3& 77.9 &59.1& 92.4& 68.7& 84.5 &74.2& 91.5& 71.8& 88.8& \bf42.8  &77.5 \\
\hline
\end{tabular}
\end{adjustwidth}
\end{table*}

\begin{figure}[htb]
	\centering
	\includegraphics[scale=0.6]{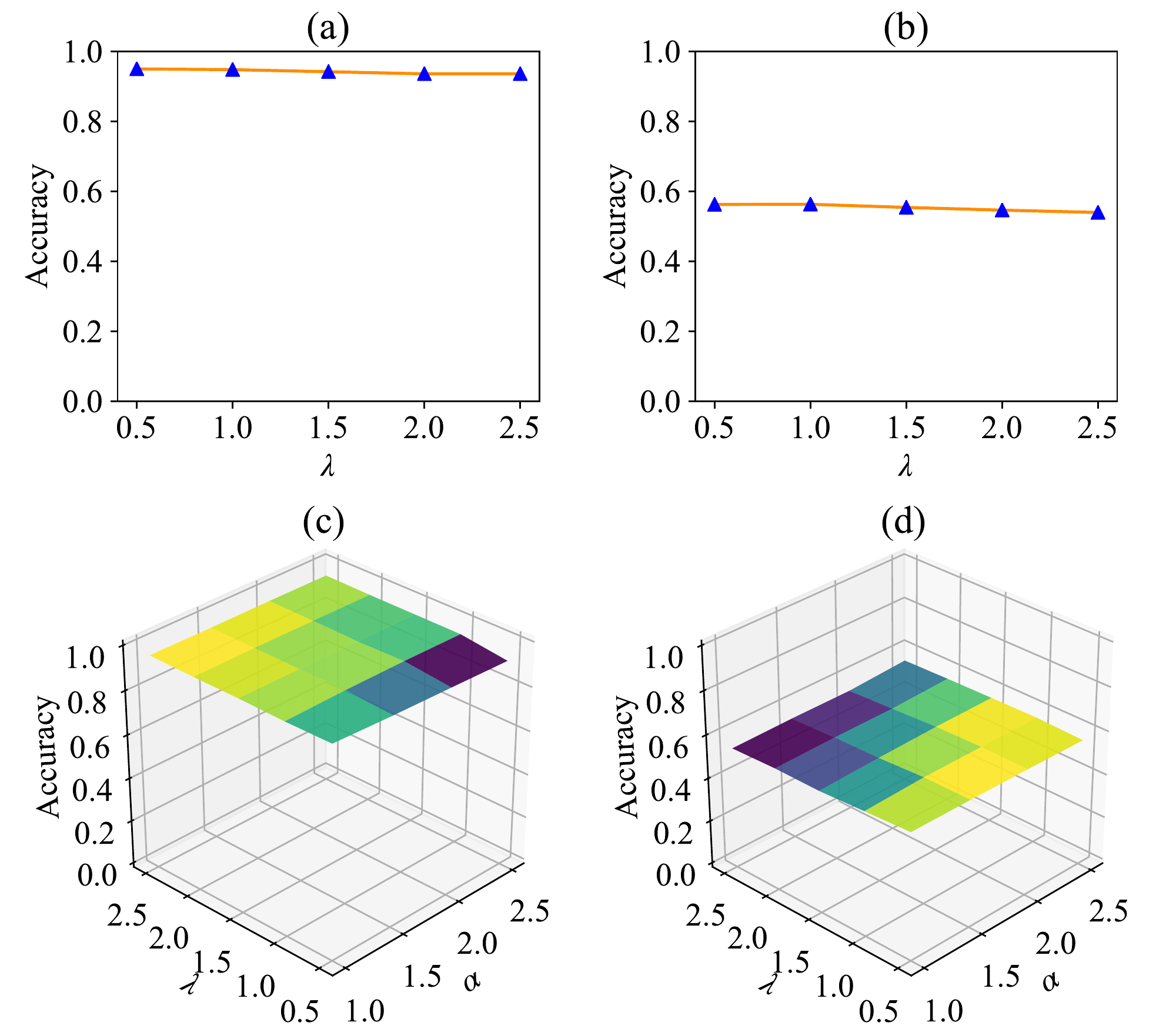}
	\caption{Accuracy of CWSM versus different $\lambda$ on (a) A$\to$W task and (b) Ar$\to$Cl task and accuracy of NSM versus different $\lambda$ and $\alpha$ on (c) A$\to$W task and (d) Ar$\to$Cl task. For both CWSM and NSM, $r=0.5$ is adopted.}
	\label{fig:par}
\end{figure}

In this section, we conduct experiments to study the influence of parameters on the proposed CWSM and NSM. CWSM and NSM have the common parameters $\lambda$ and $r$, and NSM has a private parameter $\alpha$.

For the parameter $r$, we select two typical values 0.5 and 1, and set the other parameters the same with the above-mentioned setting.
As our previous analysis, $r$ plays a role to balance discriminability and equity. For verifying this point, we define two quantities, discriminability and equity, to measure how discriminate and how balanced the target predictions are, respectively. We adopt the mean squares, i.e., $\frac{1}{N}\sum_{i=1}^N\sum_{j=1}^CP_{ij}^2$, to measure/define the discriminability. Obviously, the higher the mean squares is, the more discriminative the prediction is. Besides, we define a concretely quantitative measure for the equity as, $1-\sum_{c=1}^C \left|\frac{n_c}{N} - \frac{1}{C}\right|$, where $N$ denotes the total number of target samples, $n_c$ denotes the number of samples belonging to $c$-th class, and $C$ denotes the number of classes. The higher the equity is, the more balanced the predicted classes are. As shown in Table \ref{tab:diff}, for both CWSM and NSM, higher $r$ value ($r=1$) results in higher equity and lower discriminability. This result evidences our previous analysis.

We evaluate the performance of CWSM and NSM under $r=0.5$ and $r=1$ on both datasets. Table \ref{tab:par_office} and \ref{tab:par_office_home} show that $r=0.5$ achieves better performance than $r=1$. Combining Table \ref{tab:diff}, large $r$ results in higher equity but lower discriminability. For UDA, it is impossible to know the best proportion of the two factors in advance. Thus, the parameter $r$ needs to be tuned according to real applications. For the used two benchmark datasets, $r=0.5$ is a
good choice to achieve satisfied results.

In order to investigate the effect of $\lambda$ on CWSM and the effect of $\lambda$ and $\alpha$ on NSM, we vary $\lambda$ from the range \{0.5, 1.0, 1.5, 2.0, 2.5\} and $\alpha$ from the range \{1.0, 1.5, 2.0, 2.5\} and run CWSM and NSM on two tasks A$\to$W and Ar$\to$Cl which are from Office-31 and Office-Home datasets, respectively. As shown in Fig. \ref{fig:par}, the performance of CWSM and NSM is stable to the change of the values of $\lambda$ and $\alpha$.

\section{Conclusion}
In this paper, we reveal that nuclear norm maximization can maximize the equity in an implicit manner. Accordingly, we propose to solve unsupervised domain adaptation (UDA) with an explicit discriminability-and-equity maximization paradigm based on squares losses. In this paradigm, we offer two novel loss functions, i.e., class weighted squares maximization (CWSM) and normalized squares maximization (NSM), to simultaneously maximize discriminability and equity. We theoretically prove that the optimal solution to CWSM and NSM is with maximal discriminability and equity under mild conditions. Experimental results of two losses on three benchmarks imply the potential efficacy of the equity in UDA. 

\section*{Acknowledgments}
This work was supported by the National Natural Science Foundation of China [61806213, 61702134, 61906210].

\bibliography{ref}

\end{document}